\newtheorem{thm}{Theorem}
\newtheorem{defi}{Definition}
\newtheorem{lemma}{Lemma}
\newtheorem{corollary}{Corollary}
\newtheorem{obser}{Observation}
\newtheorem{claim}{Claim}
\theoremstyle{definition}
\newtheorem{remark}{Remark}
\newtheorem{prop}{Proposition}
\title{Parametrized Accelerated Methods,\\ Free of Condition Number}
\author{
   Chaoyue Liu,\ Mikhail Belkin \\
  Department of Computer Science and Engineering\\
  The Ohio State University\\
  \texttt{liu.2656@osu.edu, mbelkin@cse.ohio-state.edu}\\
}
\date{\today}
\begin{document}

\maketitle

\begin{abstract}
Analyses of accelerated (momentum-based) gradient descent usually assume bounded condition number to obtain exponential convergence rates. However, in many real problems, e.g., kernel methods or deep neural networks, the condition number, even locally, can be unbounded, unknown or mis-estimated.  This poses problems in both implementing and analyzing accelerated  algorithms. In this paper, we address this issue by proposing parametrized  accelerated methods by considering the condition number as a free parameter. We provide spectral-level analysis for several important accelerated algorithms, obtain explicit expressions and improve  worst case convergence rates. Moreover, we show that those algorithm converge exponentially even when the condition number is unknown or mis-estimated.

\end{abstract}

\section{Introduction}
Accelerated (momentum-based) gradient descent and its variants are arguably among the most popular optimization methods in modern machine learning. It is  a workhorse of optimization for deep neural networks and achieves state-of-the-art results in a range of applications~\cite{bengio2013advances,gregor2015draw,xu2015show}.

Momentum-based algorithms are a class of first order iterative methods which use gradient evaluations from several previous iterations. These methods can be shown to reduce the number of iterations compared to ordinary gradient descent. There is an extensive literature on analyzing such accelerated schemes, notably Nesterov's accelerated gradient descent (Nesterov's AGD) (see, e.g.,~\cite{nesterov2013introductory, bubeck2015convex}, and references therein). 

We note that most analyses of momentum-based accelerated methods assume strong convexity (bounded condition number $\kappa$) to obtain exponential\footnote{Called {\it linear} in the optimization literature.} convergence rates, i.e., $O(e^{-k/\sqrt{\kappa}})$, where $k$ is the number of iterations. Only much slower rates $O(1/k^2)$ can be derived without that assumption~\cite{bubeck2015convex}.  
Moreover the optimal choice of parameters for these accelerated methods depends explicitly on the condition number $\kappa$.

However, in many real problems, $\kappa$ can be very large or even unbounded.
For example, it can be shown that for smooth kernels $\kappa$ grows nearly exponentially with the number of data points~\cite{wendland2004scattered,2018arXiv180103437B}. While neural networks are generally non-convex, Hessian matrices at minima appear to have many small eigenvalues resulting in high (local) condition  numbers~\cite{2016arXiv161107476S}.
The condition number is generally difficult to estimate. Such estimation is costly (potentially as expensive as full matrix inversion) and numerically unstable, requiring estimating the inverse of the smallest eigenvalue of a positive definite Hessian matrix.
When $\kappa$ is not known or mis-estimated, we generally have no guarantee for the validity of momentum-based methods. 
Moreover, if the condition number is known but very large, the exponential theoretical rate $O(e^{-k/\sqrt{\kappa}})$ can still be very slow, and potentially requires more computation  than the Newton's method.

In this paper, our primary goal is to understand performance of momentum-based algorithms and their parameter selection, when the condition number is very large or unknown. To that end, whenever parameter choice for specific algorithms depends on knowing the condition number, we propose to parametrize the algorithms by treating those parameters as ``free''. These parametrized algorithms should be proven to converge, for all choices of the parameter, in order to be validated.

To be able to do that and to simplify the analysis, we consider quadratic objective functions. This is an important  case which allows for much precise analysis than general convex functions.
Moreover, even when the objective function is non-convex but smooth, as is the case for many neural networks, 
it can be approximated by a quadratic function near any of its local minima.

Most previous analyses focus on the worst case convergence behavior of such momentum-based algorithms. 
However, convergence can be much faster depending on the specific spectral properties of the Hessian.
 Thus we expect spectral level analysis, providing rates for each individual eigenvalue, to be much more precise. Still, to the best of our knowledge, explicit spectral level representations for these algorithms (except for the Chebyshev semi-iterative method) are not found in the literature. 
A recent paper \cite{neubauer2017nesterov} explored spectral-level properties of Nesterov's AGD, but still did not give an  explicit expression for the spectral-level convergence rate.

In this paper, we study and provide explicit spectral analysis for three important momentum-based accelerated methods: Nesterov's AGD, Chebyshev semi-iterative method (Chebyshev) and Second-order Richardson method (SOR). Nesterov's AGD is very commonly used in practice and extensively analyzed~\cite{bubeck2015geometric,arjevani2015lower,su2014differential}. The classical Chebyshev semi-iterative method~\cite{lanczos1952solution,golub1961chebyshev} has a number of optimality properties, while SOR~\cite{frankel1950convergence,riley1954iteration}, also known as the heavy ball method~\cite{polyak1964some}, is  the simplest fixed coefficient momentum scheme. 
In this paper, we collectively call the set of these three methods the {\it accelerated class}. \\

\noindent{\bf Our Contribution.}
\begin{itemize}
\item In this work we give explicit spectral level representations  for the accelerated class methods. As far as we know, these are the first explicit expressions for Nesterov's AGD and SOR methods. 
We analyze and compare their convergence rates.
In particular, we show that these algorithms converge exponentially for each eigenvalue, which improves the  rate obtained in~\cite{neubauer2017nesterov}. We also express their worst-case convergence guarantees in terms of what we call Chebyshev numbers, which can be computed explicitly. Interestingly, we observe that Nesterov's AGD has the slowest worst case convergence rate among the accelerated class, and that none of the algorithms accelerate all scenarios.

 \item We show all of the accelerated class algorithms converge even when the condition number is mis-specified. We also see how their rates of convergence depend on  the choice of the parameter, corresponding to the condition number. We also provide a comparison of these methods in the ``beyond the condition number'' non-strictly convex regime. Additionally we show that in that regime all of the accelerated class methods converge faster than ordinary gradient descent. 
\end{itemize}

\noindent{\bf Organization. }The paper is organized as follow: In section~\ref{sec:preliminary}, we list some useful preliminaries and notations. In section~\ref{sec:algorithms}, we briefly introduce the accelerated class algorithms. In section~\ref{sec:representation}, we provide the explicit expressions of spectral-level convergence rate, and show basic but important observations. In section~\ref{sec:strongly}, we analyze the convergence behavior of the accelerated class methods, in the strongly convex setting. In section~\ref{sec:parameterize}, we propose parametrized methods which do not require any assumption on or knowledge of condition number, and provide analysis and compare their convergence performance. We also discuss the effect of changing the acceleration parameter. Proofs of main theorems can be found in the Appendix.

\section{Preliminaries and Notations}\label{sec:preliminary}
In this section, we introduce notation and some important background definitions and results, see, e.g.~\cite{achieser2013theory,bubeck2015convex,golub1961chebyshev}.

Consider the problem of minimizing a least square objective function:
\begin{equation}
    f(w) = \frac{1}{2}\|y-Xw\|^2, \ w\in \mathcal{W},
    \label{eq:objfunc}
\end{equation}
where $\mathcal{W}$ is a Hilbert space, $X$ is a linear operator from $\mathcal{W}$ to another Hilbert space $\mathcal{Y}$ and $y\in \mathcal{Y}$. $w$ is usually interpreted as \emph{weight} in some literature.

The Hessian operator $X^*X$ is positive definite, and hence $f(w)$ is convex. The gradient of $f$ is 
\begin{equation}
    \nabla f(w) = X^*y - X^*Xw.
    \label{eq:gradient}
\end{equation}
If $f(w)$ is further strictly convex, there would be a unique optimizer $w^*=(X^*X)^{-1}X^*y$.

\begin{defi}[Strong Convexity]
function $f: \mathcal{W}\to \mathbb{R}$ is $\alpha$-strongly convex if it satisfies
\begin{equation}
    f(w) \ge f(v) + \langle \nabla f(v), w-v \rangle + \frac{\alpha}{2}\|w-v \|^2, \ \forall w,v\in \mathcal{W}.
\end{equation}
\end{defi}
\begin{defi}[Smoothness]
function $f: \mathcal{W}\to \mathbb{R}$ is $\beta$-smooth if it satisfies
\begin{equation}
    |f(w) - f(v) - \langle \nabla f(v), w-v \rangle| \le  \frac{\beta}{2}\|w-v \|^2, \ \forall w,v\in \mathcal{W}.
\end{equation}
\end{defi}
When both $\alpha$-strong convexity and $\beta$-smoothness are satisfied, one can define the \emph{condition number} $\kappa:= \beta/\alpha$.

\begin{defi}[{Operator Spectrum}]
Let $\mathcal{H}$ be a Hilbert space. Given an operator $T: \mathcal{H}\to \mathcal{H}$, the (operator) spectrum of $T$ is
\begin{equation}
    \textnormal{sp}(T) = \{\mu\in \mathbb{C}| T-\mu I \textrm{ is not invertible}\}.
\end{equation}
Every $\mu\in\textnormal{sp}(T)$ is called an eigenvalue of $T$.
\end{defi}
\begin{prop}
When $T$ is self-adjoint, $sp(T)\subset \mathbb{R}$. If $T$ is further positive definite, $sp(T) \subset [0,\infty)$.\label{prop:sp}
\end{prop}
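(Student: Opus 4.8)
The plan is to prove the contrapositive in each case: any $\mu$ lying outside the claimed set belongs to the resolvent set, i.e.\ $T-\mu I$ has a bounded inverse. For the first claim, write $\mu=a+bi$ with $a,b\in\mathbb{R}$ and assume $b\neq 0$. Since $T-aI$ is self-adjoint, the quantity $\langle (T-aI)x,x\rangle$ is real, so expanding $\|(T-\mu I)x\|^2=\|(T-aI)x-bix\|^2$ the cross terms are purely imaginary and cancel, leaving the identity $\|(T-\mu I)x\|^2=\|(T-aI)x\|^2+b^2\|x\|^2\ge b^2\|x\|^2$ for every $x$. Hence $T-\mu I$ is bounded below, and consequently injective with closed range.

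To upgrade this to invertibility I would use self-adjointness once more: $(T-\mu I)^*=T-\bar\mu I$, and the same computation (with $-b$ replacing $b$) shows $T-\bar\mu I$ is also bounded below, in particular injective. Therefore $\overline{\operatorname{ran}(T-\mu I)}=\ker(T-\bar\mu I)^{\perp}=\{0\}^{\perp}=\mathcal{H}$; combined with closedness of the range this gives surjectivity, so $T-\mu I$ is a bijection of $\mathcal{H}$. The lower bound $\|(T-\mu I)x\|\ge |b|\,\|x\|$ then yields $\|(T-\mu I)^{-1}\|\le 1/|b|$, so $\mu\notin\operatorname{sp}(T)$, proving $\operatorname{sp}(T)\subset\mathbb{R}$.

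For the second claim, assume in addition that $T$ is positive definite, so $\langle Tx,x\rangle\ge 0$ for all $x$. Fix $\lambda>0$ and take $\mu=-\lambda$. Then $T+\lambda I$ is self-adjoint with $\langle (T+\lambda I)x,x\rangle=\langle Tx,x\rangle+\lambda\|x\|^2\ge\lambda\|x\|^2$, and Cauchy--Schwarz gives $\|(T+\lambda I)x\|\ge\lambda\|x\|$. Repeating the injectivity/closed-range/dense-range argument above---here $(T+\lambda I)^*=T+\lambda I$ is itself bounded below, hence injective---shows $T+\lambda I$ is boundedly invertible, so $-\lambda\notin\operatorname{sp}(T)$. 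Together with the first claim, every $\mu\in\operatorname{sp}(T)$ is real and nonnegative, i.e.\ $\operatorname{sp}(T)\subset[0,\infty)$.

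The one genuinely non-trivial point, and the reason the argument is phrased the way it is, is passing from injectivity to surjectivity in the (possibly infinite-dimensional) Hilbert space $\mathcal{H}$: one cannot appeal to a dimension count, and must instead combine closedness of the range (from the quantitative lower bound) with density of the range (from injectivity of the adjoint, which is available precisely because self-adjointness identifies the adjoint as $T-\bar\mu I$, respectively $T+\lambda I$). Everything else reduces to elementary inner-product manipulations, so I anticipate no real difficulty there; if $T$ is unbounded one only needs the additional standing assumption that $T$ is closed, which is automatic for self-adjoint operators.
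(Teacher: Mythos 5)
Your proof is correct: the lower bound $\|(T-\mu I)x\|^2=\|(T-aI)x\|^2+b^2\|x\|^2$, the passage from injectivity of the adjoint to dense range via $\overline{\operatorname{ran}(T-\mu I)}=\ker(T-\bar\mu I)^{\perp}$, and the Cauchy--Schwarz step for the positive-definite case are all the standard textbook argument, and you correctly flag the one subtle point (closed range plus dense range in infinite dimensions). The paper itself gives no proof of this proposition --- it is stated as background in Section~\ref{sec:preliminary} with citations --- so there is nothing to compare against; your argument fills that gap in essentially the canonical way.
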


\begin{defi}[Chebyshev Polynomials] The $k$-th Chebyshev polynomial (of the first kind), denoted as $C_k$, is defined as 
\begin{eqnarray}\label{eq:chebpolynomial}
    C_k(x) = \left\{\begin{array}{ll}
    \cos (k \cos^{-1} x), & \textnormal{if } |x|\le 1,\\
    \cosh(k \cosh^{-1} x), & \textnormal{if } x>1,\\
    (-1)^k\cosh(k\cosh^{-1} (-x)), & \textnormal{if } x< -1.
    \end{array}
    \right.
\end{eqnarray}
\end{defi}
\begin{remark}
Note that $C_k$ is a polynomial of degree $k$.
\end{remark}
\begin{prop}
\textnormal{Chebyshev polynomials satisfy the following recursive relations}
\begin{equation}
  C_{k+1}(x) = 2xC_k(x)-C_{k-1}(x), \quad \forall k\ge 1.
  \label{eq:chebrecur}
\end{equation}
\end{prop}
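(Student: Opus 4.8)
The plan is to reduce the claimed identity to one elementary trigonometric identity on the interval $[-1,1]$ and then propagate it to all of $\mathbb{R}$ by a polynomial-identity argument. Fix $k\ge 1$ and first restrict to $x\in[-1,1]$, writing $x=\cos\theta$ with $\theta=\cos^{-1}x\in[0,\pi]$. On this interval the definition gives $C_j(x)=\cos(j\theta)$ for every $j$, so the recurrence is equivalent to
\begin{equation}
\cos((k+1)\theta) = 2\cos\theta\,\cos(k\theta) - \cos((k-1)\theta).
\end{equation}
This follows at once from the product-to-sum formula $\cos((k+1)\theta)+\cos((k-1)\theta)=2\cos(k\theta)\cos\theta$, which in turn is obtained by expanding $\cos(k\theta\pm\theta)$ with the angle-addition formula and adding. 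Hence $C_{k+1}(x)=2xC_k(x)-C_{k-1}(x)$ holds for every $x\in[-1,1]$.

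Next I would invoke the Remark that each $C_j$ is a polynomial of degree $j$. Then both sides of the claimed recurrence are polynomials in $x$ of degree $k+1$, and they agree on the infinite set $[-1,1]$; therefore they coincide as polynomials, so the identity holds for all $x\in\mathbb{R}$ (indeed all $x\in\mathbb{C}$), covering the regimes $x>1$, $x<-1$, and the boundary points $x=\pm1$ at once.

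If one prefers a self-contained argument not appealing to the polynomial structure, the remaining regimes can be checked directly via hyperbolic analogues: for $x>1$ set $x=\cosh t$ with $t=\cosh^{-1}x\ge0$, so $C_j(x)=\cosh(jt)$ and one uses $\cosh((k+1)t)+\cosh((k-1)t)=2\cosh(kt)\cosh t$; for $x<-1$ set $-x=\cosh t$, so $C_j(x)=(-1)^j\cosh(jt)$, and after substitution one factors out $(-1)^{k+1}$ to land on the same hyperbolic identity, while $x=\pm1$ follows by continuity since the three branches of the definition agree there. The proof is essentially routine; the only mildly delicate point is the sign bookkeeping in the case $x<-1$, which I would carry out explicitly to make sure the factor $(-1)^{k+1}$ matches on both sides.
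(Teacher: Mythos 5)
Your argument is correct and complete. The paper states this proposition without proof, treating it as a classical fact, so there is no in-paper argument to compare against; your derivation via the product-to-sum identity $\cos((k+1)\theta)+\cos((k-1)\theta)=2\cos(k\theta)\cos\theta$ on $[-1,1]$, followed by the polynomial-identity extension (or, alternatively, the direct hyperbolic verification with the sign factor $(-1)^{k+1}$ for $x<-1$), is the standard and fully rigorous way to establish it. The only point worth being careful about is that the polynomial-extension route leans on the Remark that each $C_j$ is a polynomial, which is itself often proved \emph{from} this recurrence; your self-contained hyperbolic alternative sidesteps any such circularity, so either way the proof stands.
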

 \begin{thm}
 Let $\Pi_k$ is the set of all polynomials of degree $k$ with leading coefficient 1. Then
 \begin{equation}
   \min_{P_k\in\Pi_k}  \max_{x\in[-1,1]}|P_k(x)|
 \end{equation}
 has a unique optimizer $P_k^*=\frac{1}{2^{k-1}}C_k$.
 \label{thm:chebpoly}
 \end{thm}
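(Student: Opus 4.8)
The plan is the classical equioscillation argument; throughout take $k\ge 1$ (for $k=0$ the claim is vacuous). First I would verify that $P_k^*:=\frac{1}{2^{k-1}}C_k$ lies in $\Pi_k$. Starting from $C_0\equiv 1$ and $C_1(x)=x$ and iterating the recursion~\eqref{eq:chebrecur}, an immediate induction shows $C_k$ has degree $k$ with leading coefficient $2^{k-1}$; hence $P_k^*$ is monic of degree $k$. Next, from the definition~\eqref{eq:chebpolynomial}, writing $x=\cos\theta$ with $\theta\in[0,\pi]$ gives $C_k(x)=\cos(k\theta)$, so $\max_{x\in[-1,1]}|P_k^*(x)|=2^{1-k}=:M$, and this value is attained with alternating signs: at the $k+1$ ordered nodes $x_j:=\cos(j\pi/k)$, $j=0,\dots,k$, one has $P_k^*(x_j)=(-1)^j M$.

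For optimality, I would argue by contradiction: if some $Q\in\Pi_k$ satisfied $\max_{[-1,1]}|Q|<M$, then $R:=P_k^*-Q$ would have degree $\le k-1$ (the monic leading terms cancel) and be nonzero, while at each node $(-1)^j R(x_j)=M-(-1)^j Q(x_j)\ge M-|Q(x_j)|>0$; thus $R$ strictly alternates in sign along $x_0>x_1>\dots>x_k$ and has a root in each of the $k$ open intervals $(x_{j+1},x_j)$, which is impossible for a nonzero polynomial of degree $\le k-1$. Hence $\min_{P\in\Pi_k}\max_{[-1,1]}|P|=M$ and $P_k^*$ attains it.

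For uniqueness, I would take an arbitrary minimizer $Q\in\Pi_k$, so $\max_{[-1,1]}|Q|=M$, and set $R:=P_k^*-Q$, again of degree $\le k-1$. Now the same computation yields only the weak sign pattern $(-1)^j R(x_j)\ge 0$ at the $k+1$ nodes, and the goal is to conclude $R\equiv 0$, i.e.\ $Q=P_k^*$. The crux — and the step I expect to be the main obstacle — is the lemma that a nonzero polynomial of degree $\le k-1$ cannot satisfy a weakly alternating sign pattern at $k+1$ ordered points, because here $R$ may vanish at some of the nodes and one can no longer simply count strict sign changes. I would resolve it by the standard bookkeeping: an interior node at which $R$ vanishes without changing sign is a root of even multiplicity, and a short case analysis over the gaps between consecutive nodes where $R\ne 0$ shows that, counting roots with multiplicity, $R$ still has at least $k$ of them, contradicting $\deg R\le k-1$ unless $R\equiv 0$.
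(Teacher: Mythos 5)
The paper does not actually prove Theorem~\ref{thm:chebpoly}: it is stated in the preliminaries as a classical background fact (Chebyshev's extremal property of $\frac{1}{2^{k-1}}C_k$) with citations to the literature, and no argument for it appears in the appendix. So there is no ``paper proof'' to match; what you have written is the standard equioscillation argument from approximation theory, and it is correct. The computation of the leading coefficient via the recurrence Eq.(\ref{eq:chebrecur}), the identification of the $k+1$ alternation nodes $x_j=\cos(j\pi/k)$, and the strict-alternation contradiction for optimality are all fine. You correctly isolate the only delicate point, namely uniqueness: with $\max_{[-1,1]}|Q|=M$ one only gets the weak pattern $(-1)^jR(x_j)\ge 0$, and a nonzero polynomial of degree $\le k-1$ can a priori vanish at some nodes, so one cannot just count strict sign changes. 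Your resolution --- counting zeros with multiplicity, using that a node where $R$ vanishes without changing sign is a zero of even multiplicity, and checking that the gaps between consecutive nodes still force at least $k$ zeros --- is exactly the standard lemma (it appears, e.g., in Cheney or Achieser) and closes the gap. If you wanted to avoid that bookkeeping entirely, a common alternative for uniqueness is to pass to the midpoint $S=\tfrac12(P_k^*+Q)$, which is again monic with $\max|S|\le M$, and argue that equality of $|S|$ with $M$ at each node forces $Q(x_j)=(-1)^jM$ at all $k+1$ nodes, whence $P_k^*-Q$ has $k+1$ roots and degree $\le k-1$; but your route is equally valid.
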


\section{Momentum-based Accelerated Methods}\label{sec:algorithms}
\label{sec:2}

In this section, we introduce a few classical momentum schemes, under $\alpha$-strong convexity and $\beta$-smoothness conditions. For the moment, we also assume the condition number $\kappa$ is known. 

Aiming at optimizing $f$, defined in Eq.(\ref{eq:objfunc}), first-order iterative methods (e.g., Gradient descent) utilize first-order (gradient) information of the objective function $f$ to iteratively approximate optimizer $w^*$ by an approximator $w_{k}$. 

Define error $\xi_{k} = w_{k}-w^*$. Then the norm $\|\xi_{k}\|$ indicates how far we are away from the optimizer in the current iteration. Moreover, the excess risk can be expressed as 
\begin{equation}
   f(w_{k})-f(w^*) = \|X\xi_{k}\|^2.
   \label{eq:excessrisk}
\end{equation}
Define operator $B:=I-\eta X^*X$, where $I$ is the identity operator and $\eta$ is a scalar, called \emph{step size}, to be chosen.
In this paper, we always set $\eta = 1/\beta$, to avoid potential over-shooting issues. In addition, we introduce parameter $\rho = 1-1/\kappa \in [0,1)$. 

It is easy to see that $B$ is self-adjoint. Since Hessian $X^*X$ is positive definite, by Proposition~\ref{prop:sp} and the $\beta$-smoothness condition, 
sp$(B)\subseteq[0,\rho]$. It is also important to note that an eigen-space of $B$ is also an eigen-space of Hessian, since $B$ commutes with the Hessian $X^*X$, with an eigenvalue correspondence:
\begin{equation}
    \mu \longleftrightarrow 1-\mu_H/\beta,
\end{equation}
where $\mu_H$ is the corresponding Hessian-eigenvalue.
 
By spectral mapping theorem, the spectrum of any polynomial $P(B)$ in $B$ satisfies
 \begin{equation}
     \textnormal{sp}(P(B)) = P(\textnormal{sp}(B)) \subseteq P([0,\rho]).
 \end{equation}

\noindent{\bf Gradient Descent.} 
The (plain) gradient descent algorithm, uses full gradient information, Eq.(\ref{eq:gradient}), to iteratively update the approximator $w_{k}$, following the rule:
\begin{equation}
   w_{k+1} = w_{k} -\eta \nabla f(w_{k}), \quad \textrm{with } k\in \mathbb{N},
\end{equation}

It is not hard to see that the error $\xi_{k}$ satisfies
\begin{equation}
    \xi_{k} = B^k \xi_{0}, \quad k\in \mathbb{N}.
    \label{eq:power}
\end{equation}
\begin{remark} $B^k$ is a power of $B$, hence a polynomial of degree $k$ in $B$. In this paper, we also call the plain gradient descent as {\it power method} (also known as (first-order) Richardson or Landweber method in the literature).
\end{remark}

It is well known that~\cite{bubeck2015convex}, for $\alpha$-strongly convex and $\beta$-smooth objective functions, power method needs $O(\kappa\log(1/\epsilon))$ iterations to achieve an excess risk of $\epsilon$, while the theoretical worst case lower bound is proven to be $\Omega(\sqrt{\kappa}\log(1/\epsilon))$. 

Then, it is natural to ask: with a gradient oracle, can we design a practical algorithm which uses only $O(\sqrt{\kappa}\log(1/\epsilon))$ iterations, such that it converges faster? Or from another point of view, how can one make the excess risk $f(w_{k})-f(w^*)$ as small as possible, for every $k\in \mathbb{N}$? 

\subsection{Acceleration Problem}
To formulate the above questions, we consider a sequence $\{P_k\}_{k\in\mathbb{N}}$ of real-valued polynomials, where subscript $k$ indicates the degree of the polynomials, and let
\begin{equation}
    \xi_{k} = P_k(B) \xi_{0}, \quad k\in \mathbb{N}.
    \label{eq:generalrelation}
\end{equation}
We aim at finding a ``best" choice of the sequence $\{P_k\}_{k\in\mathbb{N}}$, such that the excess risk $\|X\xi_k\|^2$ is minimized for each k.
\begin{remark}
The reason to consider such polynomials is that polynomials are a much richer space than monomials, as in Eq.(\ref{eq:power}), but the time complexity remains of the same order. However the memory requirements generally grow linearly with the degree of the polynomials. This can be addressed by considering polynomial families with short recurrence relations, e.g., Chebyshev polynomials. 
\end{remark}

Note that the excess risk $\|X\xi_k\|^2$ depends on the matrix $X$. Hence the optimal optimization method is dependent on the properties of the data, making different solutions optimal for different optimization problems.  As  we will see in Section~\ref{sec:representation}, there exists no universal algorithm which is optimal for all optimization problems.


Hence, we first study a related data-independent problem. We introduce the following convenient quantity:
\begin{defi}[Chebyshev Number]
Given a real number $\rho\in (0,1)$, we define the Chebyshev number of polynomial $P$, which satisfies the condition $P(1) = 1$, as
\begin{equation}
    Ch_{\rho}(P) = \max_{\lambda\in[0,\rho]}|P(\lambda)|.
    \label{eq:chebnumb}
\end{equation}
\end{defi}
As will see below, the Chebyshev number measures the worst case convergence rate, which is data independent.\\

\noindent{\bf Optimization Problem.} Given $\rho\in(0,1)$, find a sequence of polynomials $\{P_k^*\}_{k\in\mathbb{N}}$ such that
\begin{equation}
P_k^* = \arg\min_{P_k} Ch_{\rho}(P_k), \quad \textnormal{subject to } P_k(1) = 1.
\label{eq:optiprob}
\end{equation}

\begin{remark} The extra condition  $P_k(1)=1$ is necessary since when $B=I$ the gradient is zero and first-order algorithms can not update $w_{k}$ at all.
\end{remark}

This optimization problem can be viewed as minimizing the   excess risk $f(w_{k})-f(w^*)$ in the worst case scenario,  assuming $\alpha$-strong convexity and $\beta$-smoothness on $f$. This is formalized by the following theorem:
\begin{prop}
\begin{equation}
  f(w_{k})-f(w^*) \le Ch^2(P_k)(f(w_{0})-f(w^*)),
\end{equation}
and the equality holds when $sp(B) = \{\lambda_{Ch}\}$, where $\lambda_{Ch}$ is the maximizer of the corresponding Chebyshev number, in Eq.(\ref{eq:chebnumb}).
\end{prop}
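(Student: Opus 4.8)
The plan is to directly translate the excess risk identity from Eq.~(\ref{eq:excessrisk}) together with the spectral mapping observations into a bound on the Chebyshev number. Recall that $f(w_k)-f(w^*) = \|X\xi_k\|^2$ and, by Eq.~(\ref{eq:generalrelation}), $\xi_k = P_k(B)\xi_0$. So I would start by writing
\begin{equation}
f(w_k)-f(w^*) = \|X P_k(B)\xi_0\|^2 = \langle P_k(B)\xi_0,\, X^*X P_k(B)\xi_0\rangle.
\end{equation}
Since $B = I-\eta X^*X$ commutes with $X^*X$, and $P_k(B)$ is a polynomial in $B$, everything in sight is simultaneously diagonalizable; in particular $P_k(B)$ is self-adjoint and commutes with $X^*X$. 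The key step is to pass to the spectral decomposition of $X^*X$ (or equivalently $B$): write $X^*X = \int \mu_H \, dE(\mu_H)$, where $E$ is the projection-valued spectral measure, so that the eigenvalue correspondence $\mu \leftrightarrow 1-\mu_H/\beta$ sends the spectrum of $B$ into $[0,\rho]$.

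Next I would rewrite both sides as spectral integrals against the \emph{same} measure. Setting $\nu(d\mu_H) = \langle \xi_0, dE(\mu_H)\xi_0\rangle$, a nonnegative scalar measure supported on $\mathrm{sp}(X^*X)$, we get $f(w_k)-f(w^*) = \int \mu_H\, P_k(1-\mu_H/\beta)^2 \, \nu(d\mu_H)$ and $f(w_0)-f(w^*) = \|X\xi_0\|^2 = \int \mu_H\, \nu(d\mu_H)$ (using $P_0 \equiv 1$ implicitly, or just $\xi_0$ directly). Now for every $\mu_H$ in the support, the argument $\lambda := 1-\mu_H/\beta$ lies in $\mathrm{sp}(B)\subseteq[0,\rho]$, hence $P_k(\lambda)^2 \le \max_{\lambda\in[0,\rho]}|P_k(\lambda)|^2 = Ch_\rho^2(P_k)$. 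Pulling this uniform bound out of the integral against the nonnegative measure $\mu_H\,\nu(d\mu_H)$ gives exactly $f(w_k)-f(w^*)\le Ch^2(P_k)\,(f(w_0)-f(w^*))$.

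For the equality claim, I would observe that when $\mathrm{sp}(B) = \{\lambda_{Ch}\}$ — i.e., $B = \lambda_{Ch} I$, so that $X^*X = \beta(1-\lambda_{Ch})I$ — the measure $\nu$ is concentrated at the single point $\mu_H = \beta(1-\lambda_{Ch})$, and then the inequality $P_k(\lambda)^2 \le Ch_\rho^2(P_k)$ is attained with equality there by definition of $\lambda_{Ch}$ as the maximizer in Eq.~(\ref{eq:chebnumb}); the spectral integral collapses to a single term and the bound is tight. The only mild subtlety — and the step I would be most careful about — is the passage to the spectral measure in the general Hilbert-space (possibly infinite-dimensional) setting: one should invoke the spectral theorem for the bounded self-adjoint operator $X^*X$ (or note $B$ is bounded self-adjoint with $\mathrm{sp}(B)\subseteq[0,\rho]$) and justify that $P_k(B)$ acts as multiplication by $P_k(\cdot)$ on the corresponding $L^2$ model, so that norms become honest integrals; in finite dimensions this is just simultaneous diagonalization and the argument is elementary. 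I do not expect any real obstacle here — the result is essentially a one-line consequence of $\|X\xi_k\|^2$ being a spectral average of $\mu_H P_k(1-\mu_H/\beta)^2$ — the work is just in being careful about the self-adjointness and commutativity that let the spectral average be formed.
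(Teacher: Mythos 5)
Your argument is correct. Note that the paper itself states this proposition without any proof (there is no corresponding section in the appendix), so there is nothing to compare against; your spectral-measure derivation --- writing $f(w_k)-f(w^*)=\|XP_k(B)\xi_0\|^2$ as an integral of $\mu_H P_k(1-\mu_H/\beta)^2$ against the nonnegative measure $\langle\xi_0,dE(\mu_H)\xi_0\rangle$, bounding $P_k^2$ uniformly by $Ch_\rho^2(P_k)$ on $\mathrm{sp}(B)\subseteq[0,\rho]$, and observing that $\mathrm{sp}(B)=\{\lambda_{Ch}\}$ forces $B=\lambda_{Ch}I$ for self-adjoint $B$ so the bound is attained --- is exactly the natural argument the authors are implicitly relying on, and it is sound in the general Hilbert-space setting.
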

Since the inequality can be an equality, $Ch^2_{\rho}(P_k)$ is the worst case convergence rate, correspondingly, the case when $sp(B) = \{\lambda_{Ch}\}$ gives the slowest convergence.

\subsection{Accelerated Class Methods}
All the following algorithms explicitly rely on the assumptions of strong convexity and smoothness, and are proven~\cite{bubeck2015convex} to use $O(\sqrt{\kappa}\log(1/\epsilon))$ iterations to reach a excess risk of $\epsilon$.\\

{\noindent}{\bf Chebyshev (Semi-Iterative) Method.} 
By theorem~\ref{thm:chebpoly}, the solution $P_k^*$ of the optimization problem, Eq.(\ref{eq:optiprob}), can be shown to be unique and has the form of "normalized" Chebyshev polynomials\footnote{ Strictly speaking, $P_k^*(x) = C_k(\frac{2x-\rho}{\rho})/C_k(\frac{2-\rho}{\rho})$, which is nothing else but just a scaled version of $P_k^*(x)$.}~\cite{golub1961chebyshev,flanders1950numerical}:
\begin{equation}
  P_k^*(x) = \frac{C_k(x/\rho)}{C_k(1/\rho)}, \quad k\in \mathbb{N},
  \label{eq:chebyshev}
\end{equation}

Combined with Eq.(\ref{eq:generalrelation}), the recursive relation, Eq.(\ref{eq:chebrecur}), of Chebyshev polynomials
 allows us to compute $P_k^*(B)$, hence $\xi_k$, recursively without storing earlier information except $\xi_{k-1}$ and $\xi_{k-2}$. Thus, it is efficient in both time and space. The induced update rule for weight $w_k$ is
 \begin{subequations}
\begin{eqnarray}
& &w_{k+1} = w_k - \gamma_{k+1}\eta \nabla f(w_k) \nonumber\\
& &\quad \quad \quad + (\gamma_{k+1}-1)(w_k-w_{k-1}), \ k\in \mathbb{N};\\
& &w_{-1}=0.
\end{eqnarray}
 \end{subequations}
with the coefficients $\gamma_k$ determined by
\begin{subequations}
\begin{eqnarray}
  & &\gamma_{k+1} = 1/(1-\rho^2\gamma_k/4), \quad \textrm{for } k\ge 2;\\
  & &\gamma_1 = 1,\quad \gamma_2 = 2/(2-\rho^2).
\end{eqnarray}
 \end{subequations}
{\noindent}{\bf Second-order Richardson Iterative Method.} SOR updates $w_k$ following the rule: 
\begin{eqnarray} \label{eq:2ndrich}
   w_{k+1} = w_k - c_1\eta\nabla f(w_k) + c_2(w_k-w_{k-1}), \  k\ge 1;\nonumber\\
   w_1 = w_0 - \eta\nabla f(w_0).\quad\quad\quad\quad\quad\quad\quad\quad
\end{eqnarray}

where $c_1,c_2$ are time-independent coefficients. The displacement between the last two history records $w_k-w_{k-1}$ is usually interpreted as {\it momentum}.

The analysis of Frankel and Young~\cite{frankel1950convergence,young1954iterative} suggests the following coefficients choice\footnote{\ For the case of quadratic objective functions.},
\begin{equation}
    c_1 = \frac{2}{1+\sqrt{1-\rho^2}}:=\gamma, \quad c_2 = \gamma -1.
    \label{eq:2orcoff}
\end{equation}
$w_{k}$'s in Eq.(\ref{eq:2ndrich})  satisfy a recurrence relation:
\begin{equation}
    \xi_{k+1} = \gamma B \xi_k + (1-\gamma){\xi}_{k-1}, \ k\ge 1;\quad \xi_1 = B \xi_0.
    \label{eq:richardsonrec}
\end{equation}
It is important to note that the coefficient $\gamma_k$ in Chebyshev method is time changing and $\lim_{k\to\infty}\gamma_k =\gamma$, therefore, SOR can be viewed as the limiting case of the Chebyshev method.\\

{\noindent}{\bf Nesterov's Accelerated Gradient Descent.} Introduced by Nesterov in 1983~\cite{nesterov1983method}, Nesterov's AGD iteratively updates the approximator as follows\footnote{\ This is the constant parameter scheme.}~\cite{nesterov2013introductory}:
 \begin{subequations}
\begin{eqnarray}
    w_{k+1}=u_k-\frac{1}{\beta}\nabla f(u_k),\quad\quad\  \ \quad\quad\quad\quad \quad\ \\
    u_{k+1}=\left(1+\frac{\sqrt{\kappa}-1}{\sqrt{\kappa}+1}\right)w_{k+1}-\frac{\sqrt{\kappa}-1}{\sqrt{\kappa}+1}w_k.
    \end{eqnarray}\label{eq:nesterov}
\end{subequations}
Defining $\gamma'=1+(\sqrt{\kappa}-1)/(\sqrt{\kappa}+1)$, 
one can find the recurrence relation:
\begin{equation}
  {\xi}_{k+1} = \gamma' B {\xi}_k + (1-\gamma')B{\xi}_{k-1}, \ k\ge 1.   \label{eq:nesterovrec}
\end{equation}
\begin{remark}
Basic algebraic computation shows that $\gamma' = 2/(1+\sqrt{1-\rho})$. When comparing to the definition of $\gamma$ in Eq.(\ref{eq:2orcoff}), one note that the only difference is the absence of square  on $\rho$. This feature is essential leading to different performance than the other two accelerated methods, as will be seen in Section \ref{sec:representation} and \ref{sec:strongly}.
\end{remark}
By induction, one can easily show that recurrence relations, Eq.(\ref{eq:richardsonrec}) and (\ref{eq:nesterovrec}), also imply polynomial-type relations as in Eq.(\ref{eq:generalrelation}). We call the corresponding polynomials $R_k$ and $N_k$, respectively.

In the rest of this paper, we call the collection of Chebyshev, SOR and Nesterov's AGD as the \emph{accelerated class} methods/algorithms.

\section{Spectral-level Representation}  \label{sec:representation}
In this section, we look for explicit expressions of polynomial $P_k\in \{P_k^*, R_k, N_k\}$, for each member of the accelerated class. As will see, the value $P_k^2(\mu)$, taken at $\mu$, would be interpreted as the (spectral-level) convergence rate in the corresponding eigen-space.

\noindent{\bf Spectral-level decomposition.} 
Let $\{P_k\}_{k\in\mathbb{N}}$ be a sequence of real-valued polynomials. Suppose the error evolving under the algorithm obeys: $\xi_0=P_{k}(B)\xi_0$, with $\xi_0 = w_0-w^*$ being the initial error. Denote by $\{e_i\}$ the eigen-basis of $B$, i.e. $B e_i = \mu_i e_i, \forall i\in \mathcal{I}$, where $\mathcal{I}$ is the index set.

In terms of the eigen-basis, $\xi_k$ can be decomposed as $\xi_k=\sum_{i\in\mathcal{I}} \langle\xi_t,e_i\rangle e_i$. And since operator $P_k(B)$ commutes with $B$, each $e_i$ is also an eigen-vector of $P_k(B)$. Hence,
\begin{equation}
 \xi_k^{(i)}:= \langle\xi_k,e_i\rangle =  P_k(\mu_i)\langle\xi_0,e_i\rangle =P_k(\mu_i){\xi}^{(i)}_0.
 \label{eq:eigendecomp}
\end{equation}
Firstly, we see that eigen-components $\xi^{(i)}_k$ evolve independently from each other. Specifically, ${\xi}_{k}^{(i)}$ is determined by quantities only from its corresponding eigen-space: eigen-component of $\xi_0$, scalar value of $P_k$ at point $\mu_i$. Secondly, the spectral-level convergence rate in a particular eigen-space is measured by $P_k^2(\mu_i)$ solely, with smaller value implying faster convergence. These facts allow us to analyze the algorithms in each eigen-space independently. 

Based on these observations, we can reduce the problem of analyzing operator $P_k(B)$ to the one of analyzing the scalar-valued polynomial $P_k(\mu)$ on $sp(B)\subseteq [0,1]$ instead, which is a simpler problem.

\noindent{\bf Spectral-level convergence rates.} 
To analyze the polynomials $\{P_k\}_{k\in\mathbb{N}}$, we will derive their explicit expressions first.

Recall that we already have explicit expressions for power and Chebyshev methods, as in Eqs.~(\ref{eq:power}) and (\ref{eq:chebyshev}), respectively. But, to the best of our knowledge, explicit expressions for SOR and Nesterov's AGD  methods (corresponding to the recurrence relations Eqs.~(\ref{eq:richardsonrec}) and (\ref{eq:nesterovrec}) are not found in the literature. 
Below we derive the explicit expressions of polynomials $R_k$ and $N_k$.

For the purpose of simplifying expressions, we introduce the following notations: $\cosh \Theta = \mu/\rho$ and $\cosh\Psi = \sqrt{\mu/\rho}$, when $\mu\in(\rho,1]$; and $\cos \theta = \mu/\rho$ and $\cos\psi = \sqrt{\mu/\rho}$, when $\mu\in[0,\rho)$; and also $\cosh \Delta  = 1/\rho$ and $\cosh \Lambda  = \sqrt{1/\rho}$.
By utilizing the technique of solving linear difference equations, we can solve the recurrence relations, and have the following theorem:
\begin{thm}[Explicit expressions for $R_k$ and $N_k$]
If the algorithm obeys the recurrence relation in Eq.(\ref{eq:richardsonrec}) or (\ref{eq:nesterovrec}), 
 then $\xi_{k} = R_k(B)\xi_{0}$ or $\xi_{k} = N_k(B)\xi_{0}$, respectively, where $R_k(B)$ and $N_k(B)$  have the following analytic expressions on interval $[0,1]:$
 \begin{subequations}
 \begin{eqnarray}
 R_k(\mu) = \exp(-k\Delta)\times 
 \left\{ \begin{array}{ll}
\tanh \Delta \cot \theta \sin k\theta+ \cos k \theta, & \mu\in[0,\rho),\\
k\tanh\Delta +1, & \mu=\rho,\\
\tanh \Delta \coth \Theta \sinh k\Theta+ \cosh k \Theta, & \mu\in(\rho,1].
\end{array}
\right.\quad\ \label{eq:expressionsor}\\
 N_k(\mu) =  \mu^{k/2}\exp(-k\Lambda)\times 
 \left\{ \begin{array}{ll}
\tanh \Lambda \cot \psi \sin k\psi+ \cos k \psi, & \mu\in[0,\rho),\\
k\tanh\Lambda+1, & \mu=\rho,\\
\tanh \Lambda \coth \Psi \sinh k\Psi+ \cosh k \Psi, & \mu\in(\rho,1].
\end{array}
\right. \label{eq:expressionnesterov}
 \end{eqnarray}
 \label{eq:expressions}
 \end{subequations}
 \label{thm:poly}
\end{thm}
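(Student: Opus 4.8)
The plan is to solve the two linear difference equations \eqref{eq:richardsonrec} and \eqref{eq:nesterovrec} directly, using the fact that, since each $\mu\in\mathrm{sp}(B)$ lies in $[0,1]$ and everything commutes with $B$, it suffices (by the spectral-level decomposition, Eq.~\eqref{eq:eigendecomp}) to treat $\mu$ as a fixed scalar and verify that the claimed scalar functions $R_k(\mu)$, $N_k(\mu)$ satisfy the corresponding scalar recurrences with the correct initial data. So I would fix $\mu\in[0,1]$ and write, e.g. for SOR, $r_k:=R_k(\mu)$, which must obey $r_{k+1}=\gamma\mu\, r_k+(1-\gamma)r_{k-1}$ for $k\ge1$, with $r_0=1$ and $r_1=\mu$ (the latter since $\xi_1=B\xi_0$, so $R_1(\mu)=\mu$; one should double-check $R_0=\mathrm{id}$, i.e. $r_0=1$). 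This is a second-order linear recurrence with constant coefficients, so the standard method is to find the roots of the characteristic equation $t^2-\gamma\mu\,t-(1-\gamma)=0$ and express $r_k$ as a linear combination of the $k$-th powers of the two roots, with the coefficients pinned down by $r_0,r_1$.

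The next step is to identify the roots explicitly and recognize the trigonometric/hyperbolic form. Recall $\gamma=2/(1+\sqrt{1-\rho^2})$, so $1-\gamma=(1-\sqrt{1-\rho^2})/(1+\sqrt{1-\rho^2})$ and $\gamma^2(1-\rho^2)=4(1-\gamma)\cdot\frac{1}{\gamma}\cdots$ — more usefully, a short computation should give $1-\gamma = -\gamma^2\rho^2/4\cdot(\text{something})$; the key algebraic identity to extract is that the discriminant $\gamma^2\mu^2+4(1-\gamma)$ factors cleanly in terms of $\mu/\rho$. The cleanest route: the product of the roots is $-(1-\gamma)$ and their sum is $\gamma\mu$; writing the roots as $\sqrt{1-\gamma}\,e^{\pm i\theta}$ (when the discriminant is negative, i.e. $\mu<\rho$) forces $2\sqrt{1-\gamma}\cos\theta=\gamma\mu$, and one checks this is consistent with $\cos\theta=\mu/\rho$ after using $\sqrt{1-\gamma}=e^{-\Delta}$, i.e. $e^{-\Delta}=\sqrt{1-\gamma}$ where $\cosh\Delta=1/\rho$ — this is the identification that makes the stated $\exp(-k\Delta)$ prefactor and the $\cos k\theta,\ \tanh\Delta\cot\theta\sin k\theta$ combination appear, with the $\tanh\Delta$ coefficient coming out of matching the $r_1=\mu$ initial condition. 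The $\mu>\rho$ case is the analytic continuation $\theta\mapsto i\Theta$, and the boundary case $\mu=\rho$ is the repeated-root degeneracy, handled by the usual $r_k=(A+Bk)t^k$ ansatz, giving the $k\tanh\Delta+1$ formula. For $N_k$ the recurrence \eqref{eq:nesterovrec} is $\xi_{k+1}=\gamma'B\xi_k+(1-\gamma')B\xi_{k-1}$, so the characteristic polynomial is $t^2-\gamma'\mu\,t-(1-\gamma')\mu=0$, and the extra factor of $\mu$ on the constant term is exactly what produces the $\mu^{k/2}$ prefactor; with $\gamma'=2/(1+\sqrt{1-\rho})$ one repeats the same analysis with $\rho$ replaced by $\sqrt{\rho}$ inside the angle (hence $\cos\psi=\sqrt{\mu/\rho}$ and $\cosh\Lambda=\sqrt{1/\rho}$), the product of roots now being $-(1-\gamma')\mu$ of modulus $\mu\cdot e^{-2\Lambda}$.

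I would organize the write-up as: (i) reduce to scalars and state the characteristic equation; (ii) dispose of the three regimes ($\mu<\rho$, $\mu=\rho$, $\mu>\rho$) via roots; (iii) solve for the two constants from the initial conditions and simplify into the stated closed form using the $\Delta,\Lambda,\theta,\psi,\Theta,\Psi$ substitutions; (iv) note that $R_0,N_0$ and $R_1,N_1$ match, and that since both sides are analytic in $\mu$ on each subinterval and agree at the boundary $\mu=\rho$ by continuity, the piecewise formula is genuinely the restriction of a polynomial. Finally, lift back from scalars to operators via the spectral/functional calculus, which is immediate since $\mathrm{sp}(B)\subseteq[0,1]$ and the recurrences for $\xi_k$ are the operator versions of the scalar ones. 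The main obstacle I anticipate is purely computational bookkeeping: verifying the algebraic identities $\sqrt{1-\gamma}=e^{-\Delta}$, $\gamma\mu=2e^{-\Delta}\cos\theta$ (and the $\gamma'$ analogues), and then matching the $r_1=\mu$ / $N_1$ initial condition so that the coefficient of the $\sin k\theta$ term collapses exactly to $\tanh\Delta\cot\theta$ — there is real risk of sign errors and factor-of-two slips here, and it is worth isolating these identities as a short preliminary lemma so the rest of the argument reads cleanly. One subtlety to flag: the $N_k$ initial data is slightly different (the Nesterov recurrence is only asserted for $k\ge1$, so one must back out $N_0$ and $N_1$ from \eqref{eq:nesterov} itself rather than from \eqref{eq:nesterovrec}), and the $\mu^{k/2}$ factor means $N_k$ as written is only manifestly a polynomial after combining with the $\cos k\psi$ etc.\ — a remark confirming integrality of the exponents is in order.
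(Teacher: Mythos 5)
Your proposal is correct and follows essentially the same route as the paper: both solve the scalar linear difference equations via the characteristic quadratic $t^2-\gamma\mu t-(1-\gamma)=0$ (resp.\ $t^2-\gamma'\mu t-(1-\gamma')\mu=0$), identify the root modulus with $e^{-\Delta}$ (resp.\ $\sqrt{\mu}\,e^{-\Lambda}$) and the phase with $\theta$ or $\Theta$, and treat $\mu=\rho$ as the repeated-root degeneracy; the paper merely packages the initial-condition matching through auxiliary polynomials $Q_k$ (with $Q_0=I$, $Q_1=\gamma B$) and the identity $\xi_k=Q_{k-1}\xi_1+(1-\gamma)Q_{k-2}\xi_0$, whereas you fit the two constants directly to $r_0=1$, $r_1=\mu$. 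The only caveat is the sign slip you already flag yourself: since $\gamma>1$, the relevant identity is $\sqrt{\gamma-1}=e^{-\Delta}$, not $\sqrt{1-\gamma}=e^{-\Delta}$.
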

\begin{proof}
See proof in Appendix~\ref{sec:pfthm2}.
\end{proof}
\begin{remark} Although $R_k$ and $N_k$ are expressed in terms of trigonometric and hyperbolic functions and angles, one should be aware that they are polynomials of degree $k$.
\end{remark}
\begin{figure}
\centering
\includegraphics[width=8.4cm]{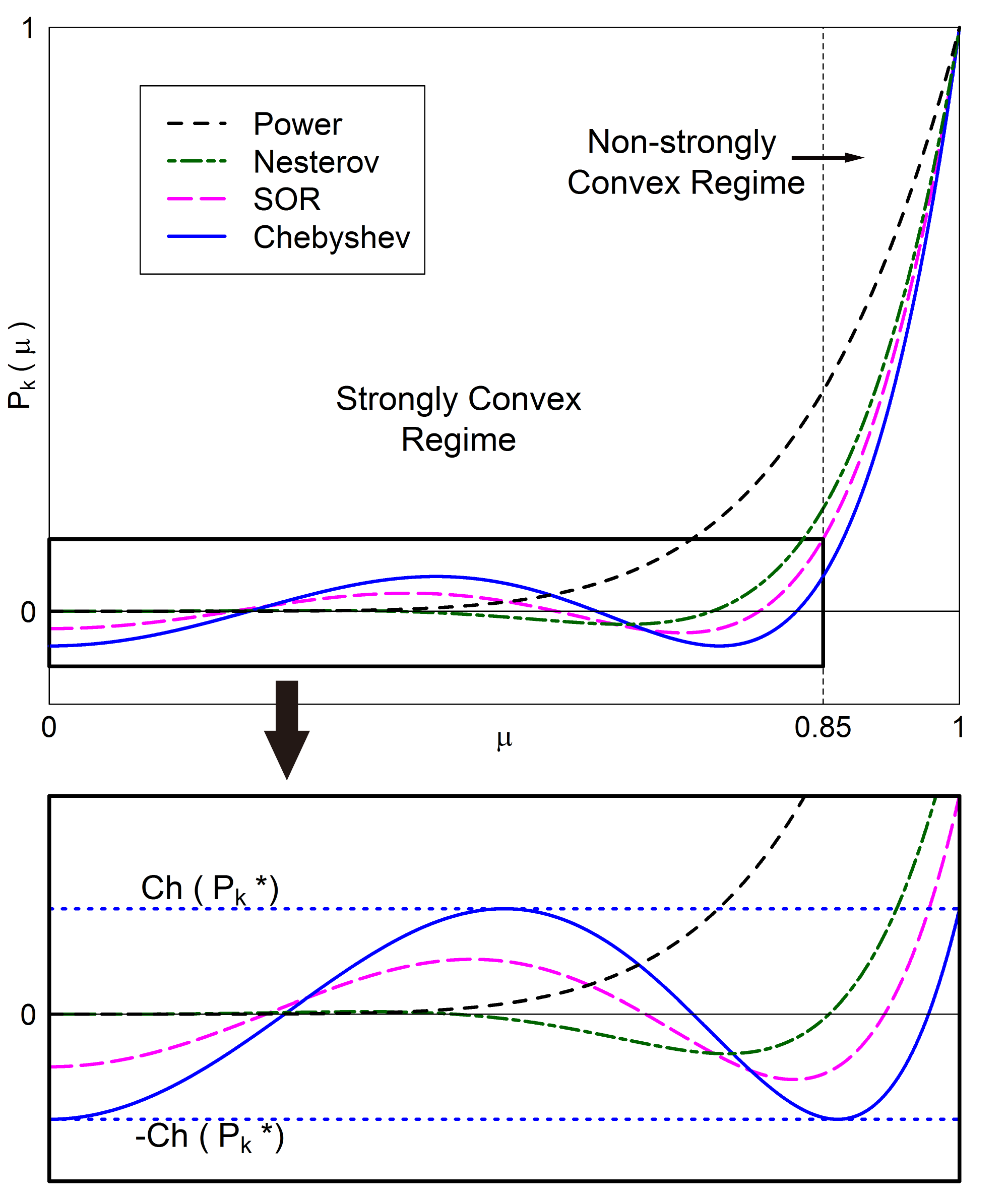}
\caption{Example plot for the polynomials $P_k\in \{P_k^*, R_k, N_k\}$, and $\mu^k$ for comparison, under the setting: $\rho=0.85, k=6$. The vertical dash line, corresponding to $\mu=\rho$ is for reference, and is the boundary of {\it strongly convex} and {\it non-strongly convex regimes}. Horizontal dashed lines in the lower plot show $\pm$ Chebyshev number of Chebyshev method. }\label{im:curves}
\end{figure}
Figure~\ref{im:curves} presents an example curve for each polynomial of the accelerated class and the power method (with $\rho=0.85, k=6$). It should be noted that left side of the figure, small $\mu$, corresponds to large Hessian eigenvalues, and right side, large $\mu$, corresponds to small Hessian eigenvalues.\\

\noindent{\bf Strongly convex and non-strongly convex regimes.} From Figure~\ref{im:curves}, one could observe the very distinct behaviours of the curves on the two sides of the vertical dashed line: on the left hand side, $P_k$ oscillate; but on the right hand side, $P_k$ are monotonically increasing.
Thus, we divide the spectrum space $[0,1)$ into two parts: the {\it strongly convex regime} (left side of vertical dash line in Figure~\ref{im:curves}), with $0\le\mu\le\rho$, corresponding to eigen-spaces that satisfy an $\alpha$-strongly convex condition; the {\it non-strongly convex regime} (left side of vertical dash line in Figure~\ref{im:curves}), with $\rho<\mu<1$, corresponding to eigen-spaces that break the $\alpha$-strongly convex condition. Note that this partition depends on our choice of parameters $\rho$.\\

Based on Figure~\ref{im:curves} and Theorem~\ref{thm:poly}, we observe that:
\begin{obser}\label{obs:1}
    Compared to the power method, polynomials of the accelerated class methods tend to take: larger  values $|P_k(\mu)|$ for small $\mu$'s (left  side of Figure~\ref{im:curves}); smaller values $|P_k(\mu)|$ for large $\mu$'s (right side of Figure~\ref{im:curves}).
\end{obser}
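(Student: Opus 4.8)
The plan is to read the statement off the closed forms: the Chebyshev expression (\ref{eq:chebyshev}) together with the formulas for $R_k$ and $N_k$ in Theorem~\ref{thm:poly}, compared against the power-method polynomial $\mu^k$. I would make the two halves precise — ``larger near $\mu=0$'' and ``smaller near $\mu=1$'' — and verify each by elementary estimates on these scalar polynomials, treating the two ends of $[0,1)$ separately.

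\emph{Small $\mu$.} Here the relevant quantity is the order of vanishing at the origin. The polynomial $\mu^k$ has a zero of order exactly $k$ at $0$, whereas from (\ref{eq:expressionsor}) the polynomial $R_k$ is either nonzero at $0$ (value $\pm e^{-k\Delta}$, when $\cos(k\pi/2)\neq 0$) or vanishes to order $1$; from (\ref{eq:expressionnesterov}) the prefactor $\mu^{k/2}$ forces $N_k$ to vanish to order $k/2$; and $P_k^*(0)=C_k(0)/C_k(1/\rho)$ is nonzero for even $k$ and of order $1$ for odd $k$. Since all of these orders are strictly below $k$ once $k\ge 2$, each $P_k\in\{P_k^*,R_k,N_k\}$ satisfies $|P_k(\mu)|>\mu^k$ on a right-neighbourhood of $0$, which is the precise meaning of ``larger for small $\mu$''.

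\emph{Large $\mu$.} I would split the right side into the non-strongly-convex regime $(\rho,1)$ and the part of $[0,\rho]$ adjacent to $\rho$. On $(\rho,1]$ the Chebyshev polynomial takes the form $P_k^*(\mu)=\cosh(k\Theta)/\cosh(k\Delta)$ with $\cosh\Theta=\mu/\rho$, $\cosh\Delta=1/\rho$, and since $\rho=1/\cosh\Delta$ one has $\mu^k=(\rho\cosh\Theta)^k$; the inequality $P_k^*(\mu)<\mu^k$ on $(\rho,1)$ then reduces to the fact that $\Theta\mapsto\ln\cosh(k\Theta)-k\ln\cosh\Theta$ is nondecreasing (its derivative is $k(\tanh k\Theta-\tanh\Theta)\ge 0$), evaluated against its value at $\Theta=\Delta$; at $\mu=1$ both sides equal $1$. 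For $R_k$ and $N_k$ I would instead read off the dominant characteristic root of the recurrences (\ref{eq:richardsonrec}), (\ref{eq:nesterovrec}): a short computation (using $\gamma-1=e^{-2\Delta}$, resp. $\gamma'-1=e^{-2\Lambda}$) gives roots $e^{\pm\Theta-\Delta}$, resp. $\sqrt{\mu}\,e^{\pm\Psi-\Lambda}$, and the same one-line manipulation that turns ``dominant root $<\mu$'' into ``$\Theta<\Delta$'' (resp. ``$\Psi<\Lambda$'') shows the dominant root is strictly below $\mu$ throughout $(\rho,1)$, so $|R_k|,|N_k|$ decay geometrically at that smaller base and hence lie below $\mu^k$ there. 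To cover $\mu$ near $\rho$ from the left I would use $|P_k(\mu)|\le Ch_\rho(P_k)$ on $[0,\rho]$, note that all three Chebyshev numbers are bounded by $\mathrm{poly}(k)\,r^k$ with $r<\rho$ (for Chebyshev and SOR, $r=e^{-\Delta}$, and $e^{-\Delta}<\rho$ is immediate from $e^{-\Delta}=(1-\sqrt{1-\rho^2})/\rho$), and conclude $\mu^k>|P_k(\mu)|$ on an interval $(\mu_0,\rho]$ whose left end $\mu_0\to r<\rho$ as $k\to\infty$ — i.e.\ once $k\gtrsim\sqrt{\kappa}$.

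\emph{Main obstacle.} The first genuine difficulty is upgrading the dominant-root heuristic for $R_k$ and $N_k$ to a pointwise bound: the coefficients in the two-term representation $A_+z_+^k+A_-z_-^k$ degenerate (with a linear-in-$k$ factor) as $\mu\to\rho^+$, where the two roots coalesce, and one must check that this $\mathrm{poly}(k)$ loss is absorbed by the exponential gap between $z_+$ and $\mu$. More fundamentally, the Observation is only approximately true — the crossover value of $\mu$ at which $|P_k(\mu)|$ overtakes $\mu^k$ depends on both $k$ and $\rho$ — so the honest deliverable is the pair of sharp limiting statements above (dominance near $0$, domination near $1$) together with the Chebyshev-number bound on $[0,\rho]$, rather than a single clean inequality on a fixed partition of $[0,1)$.
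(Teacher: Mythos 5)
You should first be aware that the paper does not actually prove this Observation: it is offered as a qualitative reading of Figure~\ref{im:curves} together with the explicit formulas of Theorem~\ref{thm:poly} (hence the hedged wording ``tend to take''), and the only rigorously established piece of it is the large-$\mu$ half restricted to the non-strongly convex regime, namely Theorem~\ref{thm:nonstronglycomp}. Measured against that, your proposal is a sensible and largely sound formalization. The small-$\mu$ half via orders of vanishing at the origin ($k$ for $\mu^k$ versus at most $1$ for $P_k^*$ and $R_k$ and roughly $k/2$ for $N_k$) is correct, elementary, and appears nowhere in the paper. On $(\rho,1)$ your Chebyshev argument is, up to taking logarithms, the paper's own: the proof of Theorem~\ref{thm:nonstronglycomp} rests on the monotonicity in $\theta$ of $g_1(\theta)=\cosh k\theta/\cosh^k\theta$ and $g_2(\theta)=\cosh\theta\sinh k\theta/(\sinh\theta\cosh^k\theta)$, and your derivative $k(\tanh k\Theta-\tanh\Theta)$ is exactly $(\log g_1)'$. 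Where the two routes genuinely diverge is $R_k$ and $N_k$: you correctly flag that the dominant-root bound degenerates as $\mu\to\rho^{+}$, since the coefficient of $z_+^k$ blows up like $\coth\Theta$, and you leave that as an obstacle. The paper closes exactly this gap without ever separating the two characteristic roots: writing
\begin{equation}
R_k(\mu)=\tanh\Delta\,\cosh^k\Theta\, e^{-k\Delta}\left[\coth\Delta\; g_1(\Theta)+g_2(\Theta)\right],
\end{equation}
with $\mu^k$ equal to the same expression evaluated at $\Theta=\Delta$, the pointwise inequality follows from $g_1,g_2$ being increasing, and the apparent $\coth\Theta$ singularity is harmless because it sits inside $g_2$, which stays finite as $\Theta\to 0$ (indeed $\sinh k\Theta/\sinh\Theta\to k$). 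The Nesterov case then follows by the substitution $\mu\to\sqrt{\mu}$, $\rho\to\sqrt{\rho}$ and multiplication by $\mu^{k/2}$, as you anticipate; if you want a single self-contained argument, adopt this $g_1,g_2$ device in place of the root-splitting. Your closing caveat --- that the crossover point between the two behaviours depends on $k$ and $\rho$, so no clean inequality holds on a fixed partition of $[0,1)$ --- is accurate, and is precisely why the paper states this as an Observation rather than a theorem.
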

This observation indicates that these accelerated class methods converge slower than power method in eigen-spaces with very small $\mu$, or equivalently with large Hessian eigenvalues.
Then we immediately have the following important
\begin{remark}
The accelerated class methods do not accelerate convergence for all cases, specifically for cases in which Hessian eigenvalues are concentrated near the top of the spectrum. 
\end{remark}
However, they do accelerate in the worst case scenario, as is well-known in literature.
In Section~\ref{sec:strongly} we will see that they also provide acceleration in the non-strongly convex regime (for very small eigenvalues of the Hessian).\\

\noindent{\bf Reconstruction of excess risk.} With explicit expressions of such polynomials, we can reconstruct the excess risk $f(w_k)-f(w^*)$, once given spectral information, i.e. the value of $\mu_i$, $\forall i\in \mathcal{I}$, or distribution of $\mu$:
\begin{equation}
    {f(w_k)-f(w^*)} = \sum_{i\in\mathcal{I}} \beta(1-\mu_i)P_k^2(\mu_i) (\xi_0^{(i)})^2.
\end{equation}
Note that this excess risk is not directly computable, since $\xi_0$ will be never known. But if distribution of $\xi_0$ is somehow given or well-approximated, we can calculate an expected excess risk
\begin{equation}
\mathbb{E}_{} \left[{f(w_{k})-f(w^*)}\right] =  \sum_{i\in\mathcal{I}} \beta(1-\mu_i)P_k^2(\mu_i) \mathbb{E} [(\xi_{0}^{(i)})^2], \nonumber
\end{equation}
where the expectation is taken over distribution of $\xi_{0}$. For example, if we assume the initialization is isotropic, i.e. $\mathbb{E} [(\xi_{0}^{(i)})^2] = \mathbb{E} [(\xi_{0}^{(j)})^2], \forall i,j\in \mathcal{I}$, then the convergence rate of the expected excess risk can be computed by $\sum (1-\mu_i)P_k^2(\mu_i)/ \sum (1-\mu_i)$.

\section{Analysis in the Strongly Convex Regime}\label{sec:strongly}
In this section, we perform analysis on the accelerated class algorithms in the strongly convex regime, based on the explicit expressions for these algorithms.  Assuming $\alpha$-strong convexity with $\alpha = \beta/(1-\rho)$ would lead to the same results in this regime as  the eigen-components $\xi^{(i)}$ evolve independently from each other, and their behaviors only depend on the parameter $\rho$, which is determined by $\alpha$.\\

\noindent{\bf Worst case convergence rate.} Recall that the spectral-level convergence rate is solely determined by the value $P_k^2(\mu)$, and that smaller value implies faster convergence.  According to the definition of Chebyshev number, Eq.(\ref{eq:chebnumb}), and discussion of the Chebyshev method in Section~\ref{sec:algorithms},  we have the following claims:
\begin{claim}
 Square of {Chebyshev number}, $Ch_{\rho}^2$, measures the worst case convergence rate, under the strongly convex setting. Formally, $\forall \mu\in[0,\rho]$,
 \begin{equation}
     |P_k(\mu)|\le Ch_{\rho}(P_k), \ \forall P_k\in \{P_k^*, R_k, N_k\}, \forall k\ge 1.
 \end{equation}
\end{claim}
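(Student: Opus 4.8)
The plan is to establish the uniform bound $|P_k(\mu)| \le Ch_\rho(P_k)$ for all $\mu \in [0,\rho]$ and all three members of the accelerated class, by directly invoking the definition of the Chebyshev number. Recall that $Ch_\rho(P) = \max_{\lambda \in [0,\rho]} |P(\lambda)|$; so for any fixed $\mu \in [0,\rho]$ and any polynomial $P_k$ in the list, the inequality $|P_k(\mu)| \le \max_{\lambda \in [0,\rho]}|P_k(\lambda)| = Ch_\rho(P_k)$ is immediate, \emph{provided} $P_k$ satisfies the normalization $P_k(1) = 1$ required for the Chebyshev number to be well-defined. Thus the real content of the claim is a consistency check: each of $P_k^*$, $R_k$, $N_k$ must satisfy the defining normalization condition $P_k(1)=1$, and the maximum defining $Ch_\rho$ must actually be attained (so that writing $Ch_\rho$ for these polynomials makes sense).

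First I would verify the normalization $P_k(1)=1$ for each algorithm. For the Chebyshev method, this is manifest from the closed form $P_k^*(x) = C_k(x/\rho)/C_k(1/\rho)$ in Eq.~(\ref{eq:chebyshev}), since plugging $x=1$ gives $C_k(1/\rho)/C_k(1/\rho) = 1$. For SOR and Nesterov's AGD, I would use the recurrence relations Eq.~(\ref{eq:richardsonrec}) and Eq.~(\ref{eq:nesterovrec}) evaluated at $\mu = 1$, i.e. on the eigen-subspace where $B$ acts as the identity. At $\mu=1$, Eq.~(\ref{eq:richardsonrec}) becomes $R_{k+1}(1) = \gamma R_k(1) + (1-\gamma)R_{k-1}(1)$ with $R_1(1)=1$ and (from $\xi_0 = P_0(B)\xi_0$) $R_0(1)=1$; an easy induction gives $R_k(1)=1$ for all $k$. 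The same argument applies to $N_k$ via Eq.~(\ref{eq:nesterovrec}). Alternatively, one can read $R_k(1)=N_k(1)=1$ directly from the explicit expressions in Theorem~\ref{thm:poly}: setting $\mu=1$ forces $\cosh\Theta = \cosh\Delta = 1/\rho$, hence $\Theta = \Delta$, and the $\mu\in(\rho,1]$ branch of Eq.~(\ref{eq:expressionsor}) collapses to $e^{-k\Delta}(\tanh\Delta\coth\Delta\sinh k\Delta + \cosh k\Delta) = e^{-k\Delta}(\sinh k\Delta + \cosh k\Delta) = e^{-k\Delta}e^{k\Delta} = 1$; and similarly for $N_k$ with $\mu^{k/2}=1$, $\Psi=\Lambda$.

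Next I would note that $Ch_\rho(P_k)$ is well-defined: each $P_k$ is a polynomial, hence continuous on the compact interval $[0,\rho]$, so $\max_{\lambda\in[0,\rho]}|P_k(\lambda)|$ is attained and finite. With the normalization checked, the claim follows from the trivial observation that a value of $|P_k|$ at a single point $\mu\in[0,\rho]$ cannot exceed its maximum over that interval. I would then connect this to the convergence interpretation already set up in Section~\ref{sec:representation}: by the spectral-level decomposition Eq.~(\ref{eq:eigendecomp}), the convergence rate in the eigen-space of eigenvalue $\mu_i$ is $P_k^2(\mu_i)$, so $P_k^2(\mu_i) \le Ch_\rho^2(P_k)$ whenever $\mu_i \in [0,\rho]$ (the strongly convex regime), which is precisely the statement that $Ch_\rho^2$ bounds the worst-case rate there; combined with the Proposition preceding Eq.~(\ref{eq:optiprob})'s surrounding discussion, equality is achieved at the maximizer $\lambda_{Ch}$.

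The main obstacle — such as it is — is not analytic difficulty but bookkeeping: confirming that the somewhat unusual base cases of the recurrences ($\xi_{-1}=0$ for Chebyshev, $\xi_1 = B\xi_0$ for SOR and Nesterov) are consistent with the polynomial normalization $P_k(1)=1$, and making sure the degenerate limits of the hyperbolic expressions in Theorem~\ref{thm:poly} at the endpoint $\mu=1$ are handled correctly (the $\coth\Theta$ and $\coth\Psi$ factors are finite there since $\Theta,\Psi>0$ when $\rho<1$). Once the normalization is in hand, the rest is one line.
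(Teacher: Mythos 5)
Your proposal is correct and matches the paper's treatment: the paper offers no separate proof of this claim, presenting the inequality as immediate from the definition $Ch_{\rho}(P_k)=\max_{\lambda\in[0,\rho]}|P_k(\lambda)|$, which is exactly your one-line argument. Your additional verification that $P_k^*(1)=R_k(1)=N_k(1)=1$ (via the recurrences at $\mu=1$ or the collapse $\Theta=\Delta$, $\Psi=\Lambda$ in Theorem~\ref{thm:poly}) is a sound and worthwhile consistency check that the paper leaves implicit, but it does not change the route.
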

\begin{claim}[Optimality of the Chebyshev semi-iterative algorithm~\cite{golub1961chebyshev}]
 After the same number of iterations, Chebyshev algorithm achieves the lowest Chebyshev number among the accelerated class (and all possible first-order methods).
\end{claim}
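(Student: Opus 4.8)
The plan is to deduce the claim directly from Theorem~\ref{thm:chebpoly} together with the worst-case interpretation of the Chebyshev number in the preceding Proposition. First, I would recast the statement in the language already set up: every first-order method producing iterates of the form $\xi_k = P_k(B)\xi_0$ must use a polynomial $P_k$ of degree at most $k$ satisfying the normalization $P_k(1)=1$ (the necessity of $P_k(1)=1$ is the remark following the Optimization Problem; degree $\le k$ follows because $k$ gradient evaluations can build at most a degree-$k$ polynomial in $B$). Thus the relevant feasible set is exactly $\{P_k : \deg P_k \le k,\ P_k(1)=1\}$, and the worst-case convergence rate over all $\alpha$-strongly convex, $\beta$-smooth quadratics with the given $\rho$ is, by the Proposition, $Ch_\rho^2(P_k) = \big(\max_{\lambda\in[0,\rho]}|P_k(\lambda)|\big)^2$. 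So minimizing the worst-case rate is precisely the Optimization Problem in Eq.(\ref{eq:optiprob}).

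Second, I would invoke Theorem~\ref{thm:chebpoly} to identify the minimizer. The theorem gives the extremal polynomial for $\min_{P\in\Pi_k}\max_{x\in[-1,1]}|P(x)|$; an affine change of variables mapping $[0,\rho]$ onto $[-1,1]$, namely $x \mapsto (2x-\rho)/\rho$, converts the constrained problem over $[0,\rho]$ with normalization at the point $1$ (which maps to $(2-\rho)/\rho > 1$) into the standard Chebyshev extremal problem, after renormalizing by the leading coefficient. This is exactly the computation sketched in the footnote accompanying Eq.(\ref{eq:chebyshev}): the unique optimizer is $P_k^*(x) = C_k\!\big(\tfrac{2x-\rho}{\rho}\big)/C_k\!\big(\tfrac{2-\rho}{\rho}\big)$, equivalently $C_k(x/\rho)/C_k(1/\rho)$ after the secondary rescaling noted there. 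Uniqueness of the minimizer transfers from the uniqueness in Theorem~\ref{thm:chebpoly} through the bijective change of variables. Hence $Ch_\rho(P_k^*) = 1/C_k(1/\rho) \le Ch_\rho(P_k)$ for every competing $P_k$ with $P_k(1)=1$ and $\deg P_k \le k$, and in particular for $R_k$ and $N_k$ of the accelerated class.

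Third, I would close the loop by confirming that the Chebyshev method's iterates indeed realize $P_k^*$: this is Eq.(\ref{eq:chebyshev}) combined with the recursive construction, so its worst-case rate after $k$ steps is genuinely $Ch_\rho^2(P_k^*)$, the minimum. Squaring preserves the ordering since Chebyshev numbers are nonnegative, so the statement about convergence \emph{rates} (not just Chebyshev numbers) follows immediately.

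The main obstacle is not any single hard estimate but rather the care needed in the reduction step: one must argue precisely that \emph{every} first-order method on a quadratic is subsumed by the polynomial class with the degree and normalization constraints, and that the change of variables $[0,\rho]\to[-1,1]$ legitimately reduces the constrained minimax problem to the unconstrained one of Theorem~\ref{thm:chebpoly} while preserving both the optimizer's identity and its uniqueness. The leading-coefficient bookkeeping in that affine substitution, and verifying that the normalization point $1$ lands outside $[-1,1]$ so that $C_k$ is strictly increasing there (hence nonzero, making the normalization well-defined), is the only delicate part; the rest is a direct appeal to the cited classical result of \cite{golub1961chebyshev}.
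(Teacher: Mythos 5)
Your proposal follows essentially the same route the paper takes: the paper gives no appendix proof of this claim, instead treating it as the classical result of \cite{golub1961chebyshev} and justifying the identification of the minimizer via Theorem~\ref{thm:chebpoly} together with the affine rescaling described in the footnote to Eq.~(\ref{eq:chebyshev}), exactly as you do. One caution on the step you yourself flag as delicate: Theorem~\ref{thm:chebpoly} normalizes by the \emph{leading coefficient} while Eq.~(\ref{eq:optiprob}) normalizes by the value at $1$, and these two extremal problems are not interchanged by ``renormalizing by the leading coefficient'' since that coefficient varies over the feasible set $\{P_k : P_k(1)=1\}$ --- the standard repair is the equioscillation/zero-counting argument for the point-normalized problem --- but the paper glosses over this in precisely the same way.
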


In the following theorem, we show that the Chebyshev number is exactly the polynomial value taken at $\mu=\rho$, which is at the boundary of the regimes. This fact is also illustrated in Figure~\ref{im:curves}, for Chebyshev method.
\begin{thm}[Computation of the Chebyshev numbers]\label{thm:uppbound}
    \begin{equation}
           Ch_{\rho}(P_k) = P_k(\rho),\ \forall P_k\in \{P_k^*, R_k, N_k\}, \forall k\ge 1. \label{eq:thmstable}
    \end{equation}
    Moreover,
    \begin{subequations}
\begin{eqnarray}
   & & Ch_{\rho}(P_k^*) = 1/\cosh(k\Delta), \\
   & & Ch_{\rho}(R_k) = \exp(-k\Delta)(k\tanh\Delta +1), \\
   & & Ch_{\rho}(N_k) = {\rho}^{k/2}\exp(-k\Lambda)(k\tanh \Lambda +1).
\end{eqnarray}\label{ex}
\end{subequations}
\end{thm}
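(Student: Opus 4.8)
The plan is to prove the two assertions in sequence: first that $Ch_\rho(P_k) = P_k(\rho)$ for each $P_k \in \{P_k^*, R_k, N_k\}$, and then to read off the closed-form values by specializing the explicit expressions from Theorem~\ref{thm:poly} (and Eq.~\eqref{eq:chebyshev}) at $\mu = \rho$. The second part is essentially a substitution: setting $\mu = \rho$ makes $\cosh\Theta = 1$, $\cosh\Psi = 1$ (so $\Theta = \Psi = 0$), and the middle branches of \eqref{eq:expressionsor} and \eqref{eq:expressionnesterov} apply directly, giving $R_k(\rho) = \exp(-k\Delta)(k\tanh\Delta + 1)$ and $N_k(\rho) = \rho^{k/2}\exp(-k\Lambda)(k\tanh\Lambda + 1)$; for Chebyshev, $P_k^*(\rho) = C_k(1)/C_k(1/\rho) = 1/C_k(1/\rho) = 1/\cosh(k\cosh^{-1}(1/\rho)) = 1/\cosh(k\Delta)$, using $C_k(1) = 1$ and the definition $\cosh\Delta = 1/\rho$. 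So once the first claim is established, Eq.~\eqref{ex} follows immediately.

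The heart of the matter is therefore showing $\max_{\lambda \in [0,\rho]}|P_k(\lambda)| = P_k(\rho)$, i.e. that the maximum modulus on the interval $[0,\rho]$ is attained at the right endpoint. For the Chebyshev method this is classical: on $[0,\rho]$ the argument $x/\rho$ ranges over $[0,1]$, where $|C_k| \le 1$ with the bound tight, so $|P_k^*(\lambda)| = |C_k(\lambda/\rho)|/C_k(1/\rho) \le 1/C_k(1/\rho) = P_k^*(\rho)$. For $R_k$ and $N_k$ I would work on the strongly convex regime $[0,\rho)$ using the oscillatory branch of the expressions. Write $R_k(\mu) = \exp(-k\Delta)\bigl(\tanh\Delta \cot\theta \sin k\theta + \cos k\theta\bigr)$ with $\cos\theta = \mu/\rho$; the prefactor $\exp(-k\Delta)$ is a positive constant, so it suffices to bound $g(\theta) := \tanh\Delta\,\cot\theta\,\sin k\theta + \cos k\theta$ in absolute value by its value at $\theta = 0$, which is $k\tanh\Delta + 1$ (taking the limit $\cot\theta \sin k\theta \to k$). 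The cleanest route is to rewrite $g$ using the Dirichlet-kernel-type identity $\cot\theta \sin k\theta + \cos k\theta = \dots$ — more precisely, $\cot\theta\sin k\theta = \sum$ of cosines — or to observe that $g(\theta)$ is exactly (up to the known constant) the polynomial value, so $|g(\theta)| \le g(0)$ is equivalent to $|C_k^{\text{scaled}}(\mu)| \le C_k^{\text{scaled}}(\rho)$-type monotonicity. An alternative, perhaps more robust, approach: show directly that $|R_k(\mu)|$ and $|N_k(\mu)|$ are bounded by their endpoint values by bounding each trigonometric factor — $|\sin k\theta| \le \min(1, k\sin\theta)$ and $|\cot\theta| = \cos\theta/\sin\theta$, so $|\cot\theta \sin k\theta| \le k\cos\theta \le k$ and $|\cos k\theta| \le 1$ — which yields $|g(\theta)| \le k\tanh\Delta + 1 = g(0)$ since $\tanh\Delta > 0$. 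For $N_k$ the extra factor $\mu^{k/2} = \rho^{k/2}\cos^k\psi$ only helps, as $|\cos\psi| \le 1$ on $[0,\rho)$, and the same argument bounds the bracketed term by $k\tanh\Lambda + 1$. One must separately check the single point $\mu = \rho$ is consistent (it is, by continuity and the middle branch) and that on $(\rho,1]$ the polynomials are monotone increasing (from the $\cosh$/$\sinh$ branch with all-positive terms), so no interior maximum on $[0,\rho]$ is missed near $\rho$.

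The main obstacle I anticipate is making the bound $|g(\theta)| \le g(0)$ genuinely rigorous rather than just plausible. The naive termwise bounds $|\cot\theta\sin k\theta| \le k$ and $|\cos k\theta| \le 1$ give $|g(\theta)| \le k\tanh\Delta + 1$ only if the two terms cannot both be near-extremal with a reinforcing sign — which needs a moment's care, since at $\theta$ slightly positive $\cos k\theta \approx 1$ and $\cot\theta\sin k\theta \approx k$ have the same sign, so actually the bound does hold with room to spare; the worry is instead at larger $\theta$ where $\cot\theta$ could in principle be large — but $\cos\theta \le 1$ kills that, so $\tanh\Delta\,\cot\theta\,|\sin k\theta| \le \tanh\Delta\, k \cos\theta$, and combined with $|\cos k\theta| \le 1$ we get exactly $g(0)$. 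So the termwise argument does close, but I would double-check the edge behavior as $\theta \to \pi/2$ (i.e. $\mu \to 0$), where $\cot\theta \to 0$ and the bound is trivially fine. Writing this up carefully for both $R_k$ and $N_k$, and confirming the boundary point $\mu=\rho$ via the recurrence-consistency of the three branches, is the only real work; the closed-form evaluations are then immediate.
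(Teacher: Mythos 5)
Your proposal is correct and follows essentially the same route as the paper: triangle inequality on the oscillatory branch, the bound $|\cot\theta\,\sin k\theta|\le k\cos\theta\le k$ (which rests on $|\sin k\theta|\le k\sin\theta$ for $\theta\in[0,\pi/2]$, a fact the paper isolates and proves as Lemma~\ref{lemma:sin} and which you should likewise prove rather than assert), the extra factor $\mu^{k/2}\le\rho^{k/2}$ for Nesterov's case, and direct substitution at $\mu=\rho$ for the closed forms. The only superfluous part is your discussion of the behavior on $(\rho,1]$, which is irrelevant here since $Ch_{\rho}$ maximizes only over $[0,\rho]$.
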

Eq.~(\ref{eq:thmstable}) of this theorem carries two important messages for the accelerated class: (a), $sp(B) = \{\rho\}$ is the worst case scenario; and (b), the worst case convergence rate can be exactly computed by the value $Ch_{\rho}^2(P_k) = P_k^2(\rho)$, where $P_k\in \{P_k^*, R_k, N_k\}$. Thus we have the explicit expressions of the Chebyshev numbers, shown in Eq.~(\ref{ex}).

\noindent{\bf Comparison of algorithms.} Based on the expressions of Chebyshev numbers, we compare the worst case convergence rates across algorithms, as shown below:
\begin{thm}[Worst-case comparison]\label{thm:comparebound}
The worst case convergence rates for the accelerated class algorithms satisfy: $\forall k\ge 1$,
\begin{equation}
    0 < Ch_{\rho}(P_k^*) < Ch_{\rho}(R_k) < Ch_{\rho}(N_k) < Ch_{\rho}(\mu^k)<1.
\end{equation}
\end{thm}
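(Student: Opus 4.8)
The plan is to establish the chain of inequalities by comparing consecutive terms using the explicit expressions for the Chebyshev numbers from Theorem~\ref{thm:uppbound}, namely
\begin{equation*}
Ch_{\rho}(P_k^*) = \frac{1}{\cosh(k\Delta)},\quad
Ch_{\rho}(R_k) = e^{-k\Delta}(k\tanh\Delta + 1),\quad
Ch_{\rho}(N_k) = \rho^{k/2}e^{-k\Lambda}(k\tanh\Lambda + 1),
\end{equation*}
together with the observation that $Ch_{\rho}(\mu^k) = \rho^k$ (the power method's polynomial is $\mu^k$, which is monotone increasing on $[0,\rho]$). Recall the angle definitions $\cosh\Delta = 1/\rho$ and $\cosh\Lambda = \sqrt{1/\rho}$, so that $\Lambda = \Delta/2$ in an appropriate sense — more precisely $e^{-\Lambda} = (e^{-\Delta/2})$ only up to the relation $\cosh\Lambda = \sqrt{\cosh\Delta}$; I would record the exact identities $e^{-\Delta} = \rho/(1+\sqrt{1-\rho^2})$ and $e^{-\Lambda} = \sqrt{\rho}/(1+\sqrt{1-\rho})$ at the outset, since the comparisons will reduce to monovariable inequalities in $\rho \in (0,1)$.

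The argument proceeds in four steps, one per inequality. \textbf{Step 1: $0 < Ch_{\rho}(P_k^*)$.} Immediate, since $\cosh(k\Delta) > 0$. \textbf{Step 2: $Ch_{\rho}(P_k^*) < Ch_{\rho}(R_k)$.} Writing $\cosh(k\Delta) = \tfrac12(e^{k\Delta} + e^{-k\Delta})$, the claim $\tfrac{1}{\cosh(k\Delta)} < e^{-k\Delta}(k\tanh\Delta+1)$ is equivalent, after multiplying by $e^{k\Delta}\cosh(k\Delta) > 0$, to $e^{k\Delta} < (1 + e^{-2k\Delta})\,\tfrac12(k\tanh\Delta+1)\cdot$ — actually it is cleaner to use the known fact that among all polynomials in $\Pi_k$-type normalization the Chebyshev polynomial is the unique minimizer (Theorem~\ref{thm:chebpoly} / Claim on optimality of the Chebyshev method): since $R_k \neq P_k^*$ for $k \ge 1$, strict optimality gives $Ch_{\rho}(P_k^*) < Ch_{\rho}(R_k)$ directly, with no computation. \textbf{Step 3: $Ch_{\rho}(R_k) < Ch_{\rho}(N_k)$.} Here I would show $e^{-k\Delta} < \rho^{k/2}e^{-k\Lambda}$ and $(k\tanh\Delta + 1) \le (k\tanh\Lambda + 1)$ separately; the second follows from $\tanh$ being increasing once I check $\Delta < \Lambda$ is false — in fact $\cosh\Delta = 1/\rho > 1/\sqrt{\rho} = \cosh\Lambda$ gives $\Delta > \Lambda$, so $\tanh\Delta > \tanh\Lambda$, meaning the second factor goes the \emph{wrong} way. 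Hence the real content is that the exponential factor $e^{-k\Delta}$ versus $\rho^{k/2}e^{-k\Lambda}$ dominates: I must prove $e^{-\Delta} < \sqrt{\rho}\,e^{-\Lambda}$ with enough margin to beat the factor ratio $(k\tanh\Delta+1)/(k\tanh\Lambda+1)$, uniformly in $k$. Using the closed forms, $e^{-\Delta}/(\sqrt\rho\,e^{-\Lambda}) = \sqrt{\rho}\cdot\frac{1+\sqrt{1-\rho}}{1+\sqrt{1-\rho^2}}$; one shows this is $< 1$ and then bounds the factor ratio (which is $\le \tanh\Delta/\tanh\Lambda$ times a bounded quantity, or more simply $\le$ some constant independent of $k$ for the ratio is at most $\Delta/\Lambda \cdot$const by concavity) so that the geometric decay wins for all $k\ge 1$. \textbf{Step 4: $Ch_{\rho}(N_k) < Ch_{\rho}(\mu^k) = \rho^k < 1$.} This reduces to $\rho^{k/2}e^{-k\Lambda}(k\tanh\Lambda+1) < \rho^k$, i.e. $e^{-k\Lambda}(k\tanh\Lambda+1) < \rho^{k/2}$; substituting $e^{-\Lambda} = \sqrt\rho/(1+\sqrt{1-\rho})$ this becomes $(k\tanh\Lambda + 1) < (1+\sqrt{1-\rho})^k$, which holds since $(1+x)^k \ge 1 + kx$ and one checks $x := \sqrt{1-\rho} \ge \tanh\Lambda$ (equivalently, $\sqrt{1-\rho}\cdot\cosh\Lambda \ge \sinh\Lambda$, i.e. $(1-\rho)\cosh^2\Lambda \ge \sinh^2\Lambda = \cosh^2\Lambda - 1$, i.e. $\cosh^2\Lambda(\rho) \le 1$... wait $\cosh^2\Lambda = 1/\rho$, so this needs $1/\rho \cdot \rho \le \ldots$ — the inequality $(1-\rho)/\rho \ge 1/\rho - 1$ is an equality, giving $\sqrt{1-\rho} = \tanh\Lambda$ exactly, hence $(1+\sqrt{1-\rho})^k \ge 1+k\tanh\Lambda$ with equality only at $k=1$, and strictness for $k\ge 1$ comes from the $\rho^{k/2}$ vs $\rho^k$ gap). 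Finally $\rho^k < 1$ since $\rho \in (0,1)$.

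The main obstacle is \textbf{Step 3}, the comparison $Ch_{\rho}(R_k) < Ch_{\rho}(N_k)$: unlike the other steps it is not a one-line consequence of optimality or of a clean Bernoulli-type bound, because the ``second factor'' $k\tanh\Delta + 1$ actually exceeds $k\tanh\Lambda + 1$, so one cannot compare factor-by-factor. One must show the exponential prefactors are separated by a ratio bounded away from $1$ (geometrically in $k$) that overwhelms the at-most-linear-in-$k$ discrepancy of the second factors. I would handle this by writing $Ch_\rho(N_k)/Ch_\rho(R_k) = \big(\sqrt\rho\,e^{\Delta-\Lambda}\big)^k \cdot \frac{k\tanh\Lambda+1}{k\tanh\Delta+1}$, proving the base $b := \sqrt\rho\,e^{\Delta-\Lambda} > 1$ for all $\rho\in(0,1)$ (via the closed forms above, this is $\frac{1+\sqrt{1-\rho^2}}{\sqrt\rho(1+\sqrt{1-\rho})} > 1$, a single-variable inequality amenable to squaring), and then noting the remaining factor is $\ge \tanh\Lambda/\tanh\Delta$ (its infimum over $k$), a positive constant, so the product $\ge b^k \cdot \tanh\Lambda/\tanh\Delta \to \infty$; a short check at $k=1$ closes the small-$k$ case. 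Alternatively, and perhaps more robustly, one can invoke the limiting relationship noted in Section~\ref{sec:algorithms} that SOR is the $k\to\infty$ limit of Chebyshev and that Nesterov's scheme differs precisely by $\rho \mapsto \sqrt{\rho}$-type substitution producing an extra $\mu^{k/2}$ damping factor evaluated at $\mu=\rho$, i.e. $\rho^{k/2} < 1$, which is exactly the source of the strict loss; making that heuristic rigorous via the explicit formulas is the crux.
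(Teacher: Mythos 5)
Your Steps 1, 2 and 4 are essentially sound, and in fact take a different (shorter) route than the paper: the paper obtains both the $P_k^*$ vs.\ $R_k$ comparison and the $N_k$ vs.\ $\mu^k$ comparison as the $\Theta,\Psi\to 0$ limiting case of Theorem~\ref{thm:nonstronglycomp}, whereas you invoke strict uniqueness of the Chebyshev minimizer and a Bernoulli-type bound $1+k\tanh\Lambda=1+k\sqrt{1-\rho}<(1+\sqrt{1-\rho})^k$, both of which are legitimate. But Step 3 --- which you correctly single out as the crux, and which is the only inequality the paper proves directly in this proof --- has a genuine gap. You write the ratio as $b^k\,\frac{k\tanh\Lambda+1}{k\tanh\Delta+1}$ with $b=\sqrt{\rho}\,e^{\Delta-\Lambda}>1$ and lower-bound the second factor by its infimum over $k$, namely $\tanh\Lambda/\tanh\Delta<1$. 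That shows the ratio tends to infinity, but not that it exceeds $1$ for every $k$: for $\rho=1-\epsilon$ close to $1$ one has $\Delta\approx\sqrt{2\epsilon}$, $\Lambda\approx\sqrt{\epsilon}$, hence $\tanh\Lambda/\tanh\Delta\to 1/\sqrt{2}$ while $b\approx 1+(\sqrt{2}-1)\sqrt{\epsilon}$, so your lower bound $b^k\tanh\Lambda/\tanh\Delta$ stays below $1$ for all $k\lesssim \epsilon^{-1/2}$; a ``short check at $k=1$'' cannot cover that whole range. Your fallback (``make the $\rho\mapsto\sqrt{\rho}$ heuristic rigorous'') is an acknowledgement that the step is open, not a proof.

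The paper closes this step with a one-variable device you are missing: set $g(x)=\cosh^k x\,e^{-kx}(k\tanh x+1)$ and compute $g'(x)=k(k-1)e^{-kx}\cosh^{k-2}x\,\sinh x\,(\sinh x-\cosh x)<0$ for $x>0$ and $k\ge 2$. Since $\Lambda<\Delta$, monotonicity gives $g(\Delta)<g(\Lambda)$; multiplying through by $\rho^k$ and using $\cosh\Delta=1/\rho$, $\cosh\Lambda=1/\sqrt{\rho}$ turns this identity exactly into $Ch_{\rho}(R_k)<Ch_{\rho}(N_k)$. The point is that the weight $\cosh^k x$ couples the exponential prefactor with the $(k\tanh x+1)$ factor so that they are compared \emph{together}, avoiding the factor-by-factor impasse you ran into. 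Note also that at $k=1$ every quantity in the chain equals $\rho$ (indeed $P_1^*=R_1=N_1=\mu$), so the strict inequalities really require $k\ge 2$; your observation of equality at $k=1$ in Step 4 is correct, but the claimed ``strictness from the $\rho^{k/2}$ versus $\rho^k$ gap'' is spurious, since your reduction there was an equivalence.
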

The above inequalities are  
 consistent with the optimality of the Chebyshev algorithm and the fact that the accelerated class algorithms converge faster than the power method in the worst case.
Moreover, we see that Nesterov's AGD has the slowest worst-case convergence rate, among the accelerated class algorithms.

Combining Theorems~\ref{thm:uppbound} and~\ref{thm:comparebound}, we get the following corollary which recovers the known convergence rates, which can be found in ~\cite{bubeck2015convex}:
\begin{corollary}\label{cor:2}
\begin{equation}
    Ch_{\rho}^2(P_k^*),\ Ch_{\rho}^2(R_k),\   Ch_{\rho}^2(N_k)  \sim O\left(\exp(-\frac{k}{\sqrt{\kappa}})\right).\nonumber
\end{equation}
\end{corollary}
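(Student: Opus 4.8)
The plan is to deduce Corollary~\ref{cor:2} directly from the explicit formulas for the Chebyshev numbers in Theorem~\ref{thm:uppbound} together with the ordering in Theorem~\ref{thm:comparebound}. Since Theorem~\ref{thm:comparebound} already sandwiches all three Chebyshev numbers between $0$ and $Ch_\rho(\mu^k)<1$, and in particular $Ch_\rho(P_k^*)\le Ch_\rho(R_k)\le Ch_\rho(N_k)$, it suffices to show that the largest of the three, say $Ch_\rho(N_k)$ (or just each one individually), is $O(\exp(-k/\sqrt\kappa))$; the squares then inherit the bound since squaring only changes the constant in the exponent by a factor of $2$. So the real content is asymptotic estimates of the three closed-form expressions in Eq.~(\ref{ex}) as $\kappa\to\infty$, i.e.\ as $\rho = 1-1/\kappa \to 1^-$.

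First I would translate the auxiliary angles into $\kappa$. By definition $\cosh\Delta = 1/\rho = \kappa/(\kappa-1)$ and $\cosh\Lambda = \sqrt{1/\rho} = \sqrt{\kappa/(\kappa-1)}$. Using $\cosh^{-1}(1+x) = \sqrt{2x}\,(1+O(x))$ for small $x>0$, and $1/\rho - 1 = 1/(\kappa-1)$, I get $\Delta = \sqrt{2/(\kappa-1)}\,(1+o(1)) \asymp 1/\sqrt\kappa$, and similarly $\sqrt{1/\rho}-1 = \tfrac12\cdot\tfrac1{\kappa-1}(1+o(1))$, so $\Lambda = \sqrt{1/(\kappa-1)}\,(1+o(1)) \asymp 1/\sqrt\kappa$ as well. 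The key point is that both $\Delta$ and $\Lambda$ are $\Theta(1/\sqrt\kappa)$.

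Next I would bound each expression in Eq.~(\ref{ex}). For Chebyshev, $Ch_\rho(P_k^*) = 1/\cosh(k\Delta) \le 2\exp(-k\Delta) = O(\exp(-k/\sqrt\kappa))$, using $\cosh t \ge \tfrac12 e^{t}$. For SOR, $Ch_\rho(R_k) = e^{-k\Delta}(k\tanh\Delta + 1) \le e^{-k\Delta}(k\Delta + 1)$ since $\tanh\Delta \le \Delta$; a linear factor $k\Delta+1$ is absorbed by shrinking the exponential rate slightly (e.g.\ $k\Delta+1 \le C_\epsilon e^{\epsilon k\Delta}$ for any fixed $\epsilon\in(0,1)$), giving $O(e^{-(1-\epsilon)k\Delta}) = O(\exp(-k/\sqrt\kappa))$ up to constants. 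The Nesterov term $Ch_\rho(N_k) = \rho^{k/2}e^{-k\Lambda}(k\tanh\Lambda+1)$ is even smaller: $\rho^{k/2}\le 1$ and the same linear-factor absorption applies with $\Lambda$ in place of $\Delta$, so it is also $O(\exp(-k/\sqrt\kappa))$. Squaring each, the rates become $O(\exp(-2k/\sqrt\kappa))$, hence a fortiori $O(\exp(-k/\sqrt\kappa))$, which is the stated claim; alternatively one invokes Theorem~\ref{thm:comparebound} to reduce all three to a single estimate.

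The only mildly delicate step is making the expansions $\Delta,\Lambda = \Theta(1/\sqrt\kappa)$ rigorous and uniform in the relevant range of $\kappa$ (large $\kappa$, i.e.\ $\rho$ close to $1$); this is where I would spend care, controlling the $\cosh^{-1}$ expansion with an explicit error bound rather than a bare $o(1)$. The absorption of the polynomial prefactor $k\tanh\Delta+1$ into the exponential is routine but worth stating explicitly, since it is what allows $e^{-k\Delta}\cdot(\text{linear in }k)$ to still be written as $O(e^{-ck/\sqrt\kappa})$ with the $O(\cdot)$ hiding only a $\kappa$-independent constant (at the cost of a slightly worse constant $c$ in the exponent, which is irrelevant for the $\sim O(\cdot)$ statement). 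Everything else is direct substitution into the closed forms already proved.
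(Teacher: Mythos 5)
Your proposal is correct, and its overall architecture matches the paper's: reduce to the Nesterov term via Theorem~\ref{thm:comparebound}, read off the closed forms from Theorem~\ref{thm:uppbound}, and absorb the polynomial prefactor $k\tanh(\cdot)+1$ into the exponential using the slack created by squaring. The one substantive difference is how the rate $1/\sqrt{\kappa}$ is extracted. You expand $\Delta$ and $\Lambda$ asymptotically via $\cosh^{-1}(1+x)\approx\sqrt{2x}$, which forces you to control the expansion uniformly and to check that $2(1-\epsilon)\Lambda\ge 1/\sqrt{\kappa}$ after the $\epsilon$-absorption --- exactly the ``mildly delicate step'' you flag. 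The paper sidesteps this entirely with an exact algebraic identity: since $e^{-\Lambda}=\cosh\Lambda-\sinh\Lambda=(1-\sqrt{1-\rho})/\sqrt{\rho}$, one gets $\rho^{k/2}e^{-k\Lambda}=\bigl(1-1/\sqrt{\kappa}\bigr)^k$ with no approximation, so $Ch_{\rho}(N_k)<(k+1)\bigl(1-1/\sqrt{\kappa}\bigr)^k$ and the square splits as $\bigl(1-1/\sqrt{\kappa}\bigr)^k\cdot(k+1)^2\bigl(1-1/\sqrt{\kappa}\bigr)^k$, the second factor being $O(1)$. Both arguments are valid; the exact identity buys you a cleaner, expansion-free proof (and it is the same identity that makes the $\mu^{k/2}$ factor in $N_k$ harmless), while your route is more generic and would apply even if no such closed cancellation were available. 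If you keep your version, do verify explicitly that $\Lambda>1/\sqrt{\kappa}$ (it is true, since $\cosh(\sqrt{1-\rho})<1/\sqrt{\rho}$), so that the constant in your exponent really is at least $1$ after squaring.
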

Theorem~\ref{thm:comparebound} provides a qualitative comparison, to compare quantitatively, we look at the asymptotic case. We assume the (pseudo) condition number is sufficiently large, correspondingly $\rho$ is sufficiently close to $1$. Then we have:
\begin{thm}[Asymptotic Analysis] 
For $k \ge 1$ and small enough $1-\rho$, the Chebyshev numbers can be expressed as \label{thm:approrho}
\begin{subequations}
\begin{eqnarray}
  \textnormal{Power:}& &Ch_{\rho}(\mu^k) = 1-k(1-\rho) + o(1-\rho),\nonumber\\
  \textnormal{Chebyshev:}& &Ch_{\rho}(P_k^*)  = 1-k^2(1-\rho) + o(1-\rho),\nonumber\\
  \textnormal{SOR:}& &Ch_{\rho}(R_k) = 1-k^2(1-\rho) + o(1-\rho),\nonumber\\
  \textnormal{Nesterov's:}& &Ch_{\rho}(N_k) = 1-\frac{1}{2}(k^2+k)(1-\rho)+ o(1-\rho).\nonumber
\end{eqnarray}
\end{subequations}
\end{thm}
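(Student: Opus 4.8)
The plan is to perform a direct Taylor expansion of each Chebyshev number formula from Theorem~\ref{thm:uppbound} in the small parameter $\delta := 1-\rho \to 0^+$. The key observation is that each expression in Eq.~(\ref{ex}) is built from the quantities $\Delta$ and $\Lambda$, which satisfy $\cosh\Delta = 1/\rho = 1/(1-\delta)$ and $\cosh\Lambda = \sqrt{1/\rho} = (1-\delta)^{-1/2}$. So the first step is to expand $\Delta$ and $\Lambda$ as functions of $\delta$. Using $\cosh\Delta = 1 + \Delta^2/2 + O(\Delta^4)$ and $1/(1-\delta) = 1 + \delta + O(\delta^2)$, I get $\Delta^2/2 \approx \delta$, hence $\Delta = \sqrt{2\delta}\,(1 + O(\delta))$; similarly $\cosh\Lambda = 1 + \Lambda^2/2 + \cdots = 1 + \delta/2 + O(\delta^2)$ gives $\Lambda = \sqrt{\delta}\,(1 + O(\delta))$. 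The crucial point is that $\Delta$ and $\Lambda$ are of order $\sqrt{\delta}$, not $\delta$, which is exactly what produces the $k^2$ (rather than $k$) factor after squaring inside the hyperbolic functions.

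Next I would substitute these into each Chebyshev number. For the power method, $Ch_\rho(\mu^k) = \rho^k = (1-\delta)^k = 1 - k\delta + o(\delta)$ — immediate. For Chebyshev, $Ch_\rho(P_k^*) = 1/\cosh(k\Delta)$; since $k\Delta = k\sqrt{2\delta}(1+O(\delta))$ is small, $\cosh(k\Delta) = 1 + (k\Delta)^2/2 + O(\delta^2) = 1 + k^2\delta + o(\delta)$, so $1/\cosh(k\Delta) = 1 - k^2\delta + o(\delta)$. For SOR, $Ch_\rho(R_k) = e^{-k\Delta}(k\tanh\Delta + 1)$: expand $e^{-k\Delta} = 1 - k\Delta + k^2\Delta^2/2 + O(\delta^{3/2})$ and $\tanh\Delta = \Delta + O(\Delta^3) = \Delta(1+O(\delta))$, so $k\tanh\Delta + 1 = 1 + k\Delta + O(\delta^{3/2})$; multiplying, the $O(\sqrt\delta)$ terms cancel and the $\delta$-order coefficient is $k^2\Delta^2/2 \cdot(1/\Delta^2) \cdots$ — more carefully, $(1 - k\Delta + k^2\Delta^2/2)(1 + k\Delta) = 1 - k^2\Delta^2 + k^2\Delta^2/2 + O(\delta^{3/2}) = 1 - k^2\Delta^2/2 + o(\delta) = 1 - k^2\delta + o(\delta)$, using $\Delta^2 = 2\delta + o(\delta)$. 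For Nesterov's, $Ch_\rho(N_k) = \rho^{k/2} e^{-k\Lambda}(k\tanh\Lambda + 1)$: the factor $\rho^{k/2} = (1-\delta)^{k/2} = 1 - (k/2)\delta + o(\delta)$ is the new ingredient, and by the same computation as SOR (with $\Lambda$ replacing $\Delta$ and $\Lambda^2 = \delta + o(\delta)$) the factor $e^{-k\Lambda}(k\tanh\Lambda+1) = 1 - k^2\Lambda^2/2 + o(\delta) = 1 - (k^2/2)\delta + o(\delta)$; multiplying the two gives $1 - \frac12(k^2 + k)\delta + o(\delta)$, as claimed.

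I expect the main obstacle to be bookkeeping of the error terms rather than any conceptual difficulty: for SOR and Nesterov's I need the expansions of $\Delta$ (resp.\ $\Lambda$), of $\tanh$, and of the exponential each to high enough order that the leading $O(\sqrt\delta)$ contributions cancel exactly and the surviving $O(\delta)$ coefficient is correct. In particular I must be careful that $\Delta = \sqrt{2\delta} + O(\delta^{3/2})$ — the next correction to $\Delta$ is $O(\delta^{3/2})$, which when multiplied by $k$ and fed through $e^{-k\Delta}$ contributes only at order $\delta^{3/2} = o(\delta)$, so it is safely negligible, but this needs to be stated. A clean way to organize this is to set $t = k\Delta$ (resp.\ $k\Lambda$), note $t^2 = k^2\Delta^2 = 2k^2\delta + o(\delta)$ (resp.\ $k^2\delta + o(\delta)$), and expand $e^{-t}(t\tanh(t/k)\cdot k/\!\cdots)$ — actually simpler to just write $e^{-k\Delta}(1 + k\tanh\Delta)$ directly in powers of $\Delta$ and then substitute $\Delta^2 = 2\delta + o(\delta)$ at the end. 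The power and Chebyshev cases require essentially no work beyond the definition of $\Delta$. I would present the $\Delta,\Lambda$ expansions as a short lemma-free paragraph, then handle the four cases in sequence, emphasizing the cancellation phenomenon in the SOR/Nesterov cases since that is where the content lies.
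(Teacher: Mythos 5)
Your proposal is correct and follows essentially the same route as the paper's proof: both reduce to a Taylor expansion to order $1-\rho$ of the expressions in Theorem~\ref{thm:uppbound}, exploiting that $\Delta$ and $\Lambda$ are of order $\sqrt{1-\rho}$ (the paper expands $e^{\pm k\Delta}$ and $e^{\pm k\Lambda}$ directly, you expand $\Delta$ and $\Lambda$ first, but the bookkeeping and the cancellation of the $O(\sqrt{1-\rho})$ terms are identical). Your expansions and final coefficients all check out, including the extra $-\tfrac{k}{2}(1-\rho)$ contribution from $\rho^{k/2}$ in the Nesterov case.
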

The coefficients, expressed in terms of $k$, of $1-\rho$ linear term indicate the asymptotic convergence rate. 
We observe that, for each the accelerated class algorithm, the coefficient of 1st-order term is quadratic in number of iterations $k$. This means faster convergence and is consistent with the fact that $T = O(\sqrt{\kappa}\log (1/\epsilon))$, as expected.\\

\noindent{\bf Exponential spectral-level convergence rate.} The following theorem states that each of the accelerated class algorithms converges exponentially in each eigen-space:
\begin{thm}[Exponential Convergence] \label{thm:expconv1}
Define $\tilde{\Delta} = \log\left(\frac{1+e^{2\Lambda}}{2}\right)$, then $0\le \tilde{\Delta}<\Delta$. And moreover, $\forall \mu\in[0,\rho]:$
    \begin{subequations}
    \begin{eqnarray}
    \forall \delta\in[0,\Delta),& &  \lim_{k\to \infty}e^{k\delta}P_k^*(\mu) =\lim_{k\to \infty} e^{k\delta}R_k(\mu)  =0;\nonumber\\
    \forall \delta\in[0,\tilde{\Delta}),& & \lim_{k\to \infty} e^{k\delta}N_k(\mu)=0.\nonumber
    \end{eqnarray}
    \end{subequations}
\end{thm}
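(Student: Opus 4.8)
The plan is to read off the asymptotics of $P_k^*$, $R_k$ and $N_k$ directly from the explicit expressions in Theorem~\ref{thm:poly}, since the claim is purely about the exponential decay rate of these scalar functions on the interval $[0,\rho]$. First I would note that on $[0,\rho]$ we are always in the ``strongly convex regime'' where the trigonometric branch applies (with the single boundary point $\mu=\rho$ handled separately), so each polynomial carries an overall factor $e^{-k\Delta}$ (for $P_k^*$ and $R_k$) or $\mu^{k/2}e^{-k\Lambda}$ (for $N_k$), multiplied by a factor that is uniformly bounded in $k$: indeed $\cos k\theta$ and $\tanh\Delta\cot\theta\sin k\theta$ are bounded by a constant depending only on $\rho$ and $\theta\in(0,\pi/2]$ (and similarly $\cos k\psi$, $\sin k\psi$), while at $\mu=\rho$ the factor is linear in $k$, which is absorbed by any $e^{-k(\Delta-\delta)}$ with $\delta<\Delta$. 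Hence $e^{k\delta}P_k^*(\mu)$ and $e^{k\delta}R_k(\mu)$ are dominated by $e^{-k(\Delta-\delta)}\cdot(\text{poly}(k))\to 0$ for every $\delta\in[0,\Delta)$, and the corresponding statement for $N_k$ will follow once we identify the right exponential rate.

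For Nesterov, the overall factor is $\mu^{k/2}e^{-k\Lambda}$, so the natural decay rate is $e^{-k\delta}$ with $\delta<\Lambda-\tfrac12\log(1/\mu)$; but this depends on $\mu$ and is worst as $\mu$ ranges over $[0,\rho]$. I would therefore determine $\sup_{\mu\in[0,\rho]}\mu^{-1/2}$... more precisely, I want the $\mu$ that makes $\mu^{k/2}e^{-k\Lambda}$ decay slowest, i.e. the largest value of $\mu^{1/2}e^{-\Lambda}=\sqrt{\mu/\rho}\cdot\rho^{1/2}e^{-\Lambda}$; but $\Lambda$ itself is $\cosh^{-1}\sqrt{1/\rho}$, a constant, so the slowest-decaying $\mu$ is $\mu=\rho$, giving rate $\rho^{1/2}e^{-\Lambda}=e^{-\Lambda}\sqrt{\rho}$. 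Wait — the claimed threshold is $\tilde\Delta=\log\!\big(\tfrac{1+e^{2\Lambda}}{2}\big)$, which is \emph{not} $\Lambda-\tfrac12\log(1/\rho)$ in general, so the bound must come not just from the prefactor but from the interplay with the bracketed hyperbolic/trigonometric term near $\mu=\rho$. The key algebraic identity I would establish is $\cosh\Lambda=\sqrt{1/\rho}$ hence $\rho=1/\cosh^2\Lambda$ and $\rho\,e^{-2\Lambda}$... and then check that $\frac12 e^{-2\tilde\Delta}\cdot(1+e^{2\Lambda})/2$... concretely, I would verify directly that for $\mu=\rho$ the dominant term of $N_k(\rho)=\rho^{k/2}e^{-k\Lambda}(k\tanh\Lambda+1)$ decays like $e^{-k\tilde\Delta}$ up to polynomial factors, i.e. that $\tfrac12\log(1/\rho)+\Lambda=\tilde\Delta$ — this reduces to the identity $\sqrt{\rho}\,e^{\Lambda}\cdot e^{-2\Lambda}\cdot\tfrac{1+e^{2\Lambda}}{2}=1$, equivalently $\tfrac{1+e^{2\Lambda}}{2e^{\Lambda}}=\sqrt{1/\rho}=\cosh\Lambda$, which is precisely the definition of $\cosh$. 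So $\tilde\Delta=\Lambda-\tfrac12\log\rho$ after all, and the worst $\mu$ is $\mu=\rho$.

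So the argument structure is: (i) on $[0,\rho)$ use the trigonometric branch, bound the bracket by a $\mu$-dependent constant (noting it can blow up as $\mu\to 0$ through the $\cot\theta$ and $\cot\psi$ factors, but for a \emph{fixed} $\mu$ it is a finite constant, which is all the pointwise limit requires), and conclude via $\mu^{k/2}e^{-k\Lambda}\le \rho^{k/2}e^{-k\Lambda}$ — actually $\mu<\rho$ makes it decay \emph{faster}, so the fixed-$\mu$ limit is immediate with rate $e^{-k(\Lambda-\frac12\log\mu)}$ which beats $e^{-k\tilde\Delta}$; (ii) at $\mu=\rho$ use the explicit formula $N_k(\rho)=\rho^{k/2}e^{-k\Lambda}(k\tanh\Lambda+1)$ together with the identity $\tilde\Delta=\Lambda-\tfrac12\log\rho$ to get decay $e^{-k\tilde\Delta}\cdot(k\tanh\Lambda+1)$, so $e^{k\delta}N_k(\rho)\to 0$ for all $\delta<\tilde\Delta$. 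Finally I would confirm the ordering $0\le\tilde\Delta<\Delta$: since $\cosh\Delta=1/\rho=\cosh^2\Lambda$ and $\cosh\tilde\Delta$... from $\tilde\Delta=\Lambda-\tfrac12\log\rho=\Lambda+\tfrac12\log(1/\rho)=\Lambda+\log\cosh\Lambda$ one checks $e^{\tilde\Delta}=e^{\Lambda}\cosh\Lambda<e^{2\Lambda}\cdot$... compared with $e^{\Delta}$: $\cosh\Delta=\cosh^2\Lambda=\tfrac{1+\cosh 2\Lambda}{2}>\cosh\Lambda\cdot$... so $\Delta>\tilde\Delta$ reduces to $\cosh\Delta>\cosh\tilde\Delta$, i.e. to a one-variable inequality in $\Lambda>0$ that is straightforward to verify, and $\tilde\Delta\ge 0$ is clear since $e^{2\Lambda}\ge 1$. \textbf{The main obstacle} I anticipate is not any single estimate but getting the hyperbolic-function bookkeeping exactly right — in particular pinning down that $\tilde\Delta=\Lambda+\log\cosh\Lambda$ and matching it cleanly to the prefactor $\rho^{k/2}e^{-k\Lambda}$, and being careful that the ``constants'' in the trigonometric branch depend on $\mu$ (so the theorem is genuinely a pointwise, not uniform, statement), while the borderline decay rate is dictated entirely by the endpoint $\mu=\rho$.
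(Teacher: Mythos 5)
Your proposal is correct, and it lands on the same key identity that drives the paper's proof, namely $e^{-\tilde\Delta}=\sqrt{\rho}\,e^{-\Lambda}$, i.e.\ $\tilde\Delta=\Lambda+\log\cosh\Lambda=\Lambda-\tfrac12\log\rho$; but you handle the interval $[0,\rho)$ by a genuinely different route. The paper never argues pointwise: it invokes Theorem~\ref{thm:uppbound} to collapse the whole interval to the endpoint, $|P_k(\mu)|\le Ch_\rho(P_k)=P_k(\rho)$ for all $\mu\in[0,\rho]$, and then shows $e^{k\delta}Ch_\rho(P_k)\to 0$ from the closed forms $Ch_\rho(P_k^*)=2/(e^{k\Delta}+e^{-k\Delta})$, $Ch_\rho(R_k)=e^{-k\Delta}(k\tanh\Delta+1)$, $Ch_\rho(N_k)=e^{-k\tilde\Delta}(k\tanh\Lambda+1)$. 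That buys uniformity in $\mu$ for free, at the price of leaning on the worst-case machinery (which itself rests on Lemma~\ref{lemma:sin}). Your route --- bound the oscillatory bracket of the trigonometric branch by a fixed-$\mu$ constant and read the rate off the prefactor --- is self-contained, matches the endpoint computation, and even shows that for $\mu<\rho$ Nesterov's rate improves to $e^{-k(\Lambda-\frac12\log\mu)}$, strictly better than $e^{-k\tilde\Delta}$; it yields only the pointwise statement, which is all the theorem asserts. Two small repairs are needed. First, the $\cot\theta$ (resp.\ $\cot\psi$) factor blows up as $\mu\to\rho^-$, where $\theta\to 0$, not as $\mu\to 0$, where $\theta\to\pi/2$ and $\cot\theta\to 0$; this is harmless for fixed $\mu$ but should be stated correctly. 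Second, you leave $\tilde\Delta<\Delta$ as ``a one-variable inequality to verify'': to close it, note $e^{\tilde\Delta}=e^{\Lambda}\cosh\Lambda=\cosh^2\Lambda+\sqrt{\cosh^4\Lambda-\cosh^2\Lambda}$ while $e^{\Delta}=\cosh\Delta+\sinh\Delta=\cosh^2\Lambda+\sqrt{\cosh^4\Lambda-1}$ (using $\cosh\Delta=1/\rho=\cosh^2\Lambda$), and conclude from $\cosh^2\Lambda>1$; the paper instead checks the equivalent inequality $\rho^2<(1+\sqrt{1-\rho^2})(1-\sqrt{1-\rho})$ directly in terms of $\rho$.
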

These exponential spectral-level convergence rates are stronger than the results obtained in~\cite{neubauer2017nesterov}, in which a super-polynomial convergence rate is obtained.

\subsection{Discussion on Nesterov's AGD}

According to the polynomial expression of $N_k$, in Eq.(\ref{eq:expressionnesterov}), Nesterov's AGD seems to be a hybrid of power method and SOR. Specifically, the term $\mu^{k/2}$  corresponds to running $k/2$ iterations of power method, and the rest terms correspond to running $k$ more iterations of SOR, but on  a ''square rooted'' spectrum, i.e. $\mu \to \sqrt{\mu}$, $\rho\to \sqrt{\rho}$.

Noticing Observation~\ref{obs:1} and the appearance of the term $\mu^{k/2}$, it is reasonable to expect that Nesterov's AGD performs better than Chebyshev and SOR in eigen-spaces with larger Hessian-eigenvalue (correspondingly smaller $\mu)$, but performs worse in eigen-spaces with smaller Hessian-eigenvalue (correspondingly larger $\mu)$.\\

\noindent{\bf Slower worst case convergence rate.} Although Nesterov's AGD also have exponential convergence rates, as shown in Theorem~\ref{thm:expconv1}, the following theorem separates it from the other two accelerated class algorithms, by showing that it has a relatively slower worst case convergence rate.
\begin{thm}  \label{thm:nesterov}
Let  $\tilde{\Delta} = \log\left(\frac{1+e^{2\Lambda}}{2}\right)$ as in Theorem~\ref{thm:expconv1}, $\forall \delta$, s.t. $\tilde{\Delta}\le \delta <\Delta$,
\begin{equation}
    \lim_{k\to \infty} e^{k\delta}N_k(\rho)= \infty.
\end{equation}
\end{thm}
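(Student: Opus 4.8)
The plan is to reduce the statement entirely to the closed-form expression for $Ch_{\rho}(N_k)=N_k(\rho)$ already recorded in Theorem~\ref{thm:uppbound}, and then to read off the exact exponential growth rate of $e^{k\delta}N_k(\rho)$ by rewriting the prefactor $\rho^{k/2}e^{-k\Lambda}$ in terms of $\tilde\Delta$.

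First I would recall from Theorem~\ref{thm:uppbound} that
\[
 N_k(\rho)\;=\;\rho^{k/2}\exp(-k\Lambda)\,(k\tanh\Lambda+1).
\]
Next I would use the defining relation $\cosh\Lambda=\sqrt{1/\rho}$, i.e.\ $\rho^{1/2}=1/\cosh\Lambda$, to compute
\[
 \rho^{1/2}e^{-\Lambda}\;=\;\frac{e^{-\Lambda}}{\cosh\Lambda}\;=\;\frac{2e^{-\Lambda}}{e^{\Lambda}+e^{-\Lambda}}\;=\;\frac{2}{e^{2\Lambda}+1}\;=\;e^{-\tilde\Delta},
\]
the last equality being exactly the definition $\tilde\Delta=\log\!\big((1+e^{2\Lambda})/2\big)$. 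Raising to the $k$-th power gives $\rho^{k/2}e^{-k\Lambda}=e^{-k\tilde\Delta}$, hence
\[
 e^{k\delta}N_k(\rho)\;=\;e^{k(\delta-\tilde\Delta)}\,(k\tanh\Lambda+1).
\]

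Finally I would observe that $\rho\in(0,1)$ forces $\cosh\Lambda=\sqrt{1/\rho}>1$, so $\Lambda>0$ and therefore $\tanh\Lambda>0$; consequently $k\tanh\Lambda+1\to+\infty$. For any $\delta$ with $\tilde\Delta\le\delta<\Delta$ (a nonempty range since $\tilde\Delta<\Delta$ by Theorem~\ref{thm:expconv1}) we have $e^{k(\delta-\tilde\Delta)}\ge1$, so the product diverges, giving $\lim_{k\to\infty}e^{k\delta}N_k(\rho)=\infty$. Note this already pins down the threshold sharply: the divergence occurs even at $\delta=\tilde\Delta$ because of the polynomial factor $k\tanh\Lambda+1$.

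As for the main obstacle: there is in fact very little difficulty once Theorem~\ref{thm:uppbound} is available — the only substantive step is the one-line identity $\rho^{1/2}e^{-\Lambda}=e^{-\tilde\Delta}$, which follows immediately from $\cosh\Lambda=\sqrt{1/\rho}$. The real content of the theorem is conceptual rather than computational: in contrast with Chebyshev and SOR, whose worst-case decay at $\mu=\rho$ is governed by the larger exponent $\Delta$ (Theorem~\ref{thm:expconv1}), Nesterov's AGD at the worst-case eigenvalue $\mu=\rho$ decays only at the strictly smaller rate $\tilde\Delta$, and therefore cannot be accelerated past $e^{-k\tilde\Delta}$ in the worst case.
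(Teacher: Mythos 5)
Your proposal is correct and follows essentially the same route as the paper: both use the closed form $N_k(\rho)=\rho^{k/2}e^{-k\Lambda}(k\tanh\Lambda+1)$ from Theorem~\ref{thm:uppbound}, rewrite the prefactor as $e^{-k\tilde\Delta}$ via $\cosh\Lambda=\sqrt{1/\rho}$, and conclude divergence from $\delta\ge\tilde\Delta$ together with the linearly growing factor. If anything, your treatment of the boundary case $\delta=\tilde\Delta$ is slightly more explicit than the paper's one-line remark.
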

This fact is more explicitly illustrated in the asymptotic case, as shown in Theorem~\ref{thm:approrho}. The existence of $\frac{1}{2}$ before $k^2$ makes Nesterov's AGD has a relatively larger Chebyshev number, hence converges slower in the worst case scenario. 

Therefore, we conclude that Nesterov's AGD is not the optimal method in the sense of accelerating the worst-case scenario.

\section{Parametrized Accelerated Methods}\label{sec:parameterize}
As pointed out in the introduction, the assumption of bounded and known condition number $\kappa$ 
often does not hold in practice and can be problematic in both analysis and algorithm implementation:

Smooth kernel methods and neural networks are known to have very large or even unbounded condition numbers~\cite{2018arXiv180103437B,2016arXiv161107476S}. These condition numbers are generally difficult to estimate, since the estimation is prohibitively costly and numerically unstable. When the estimation is poor, there is  no theoretical guarantee for the validity of the accelerated class algorithms. Even if the condition number is known or well-estimated  but very large (e.g., $\sqrt{\kappa} \ll d)$, the exponential theoretical rate $O(e^{-k/\sqrt{\kappa}})$ can still be very slow, and potentially requires more computation than the Newton's method.

To address this issue, we propose to parametrize the accelerated class algorithms by treating $\rho$, or, equivalently, the ``condition number'' $\kappa$, as a free parameter. 

The parametrization  allows eigenvalues  to appear in the non-strongly convex regime, s.t.
$sp(B)\\ \not\subseteq [0,\rho]$. We  validate the parametrized accelerated class algorithms by showing that they also converge in the non-strongly convex regime, i.e. when $\mu\in(\rho,1)$. Moreover, we prove that these algorithms converge exponentially fast for each eigenvalue. Additionally, we show in the non-strongly convex regime  accelerated class methods converge uniformly faster than ordinary gradient descent (the 
power method).

\subsection{Performance in Non-strongly Convex Regime}
The validity of the accelerated class algorithms in non-strongly convex regime is guaranteed by the following convergence theorem:
\begin{thm}[Exponential convergence] \label{thm:nonstrongly}
Chebyshev, SOR, and Nesterov's AGD converge exponentially in every eigen-space in the non-convex regime, i.e. $\forall \mu\in(\rho,1):$
\begin{eqnarray}
   \forall \delta\in[0,\Delta-\Theta),\lim_{k\to \infty}e^{k\delta}P_k^*(\mu) =\lim_{k\to \infty} e^{k\delta}R_k(\mu)=0;& &\nonumber\\
   \forall \delta\in [0,\Lambda-\Psi),\lim_{k\to \infty}e^{k\delta}N_k(\mu) =0.\quad \quad \quad \quad \quad \quad \quad\ \ & & \nonumber
\end{eqnarray}
\end{thm}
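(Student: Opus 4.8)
The plan is to substitute the closed-form expressions for $P_k^*$, $R_k$ and $N_k$ from Theorem~\ref{thm:poly} --- which are valid exactly on the non-strongly convex branch $\mu\in(\rho,1]$ --- and read off the exponential decay directly from the large-$k$ asymptotics of $\sinh$ and $\cosh$.

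First I would record the ordering of the hyperbolic angles. Since $\cosh$ is strictly increasing on $[0,\infty)$ and $\cosh\Theta=\mu/\rho<1/\rho=\cosh\Delta$ whenever $\mu<1$, we get $0<\Theta<\Delta$; likewise $\cosh\Psi=\sqrt{\mu/\rho}<\sqrt{1/\rho}=\cosh\Lambda$ gives $0<\Psi<\Lambda$. Hence the ``gaps'' $\Delta-\Theta$ and $\Lambda-\Psi$ are strictly positive, so the stated ranges of $\delta$ are nonempty, and it remains to identify these gaps as the exponential rates.

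Next I would treat the Chebyshev method. On $\mu\in(\rho,1]$ both $\mu/\rho>1$ and $1/\rho>1$, so by the definition of $C_k$ we have $P_k^*(\mu)=C_k(\mu/\rho)/C_k(1/\rho)=\cosh(k\Theta)/\cosh(k\Delta)$. Writing $\cosh(kx)=\tfrac12 e^{kx}\bigl(1+e^{-2kx}\bigr)$ for $x>0$ turns this into $e^{-k(\Delta-\Theta)}\,\frac{1+e^{-2k\Theta}}{1+e^{-2k\Delta}}$, which decays like $e^{-k(\Delta-\Theta)}$; multiplying by $e^{k\delta}$ with $\delta<\Delta-\Theta$ sends it to $0$. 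For SOR I would factor $\tfrac12 e^{k\Theta}$ out of the bracket in the $\mu\in(\rho,1]$ branch of $R_k$:
$$R_k(\mu)=e^{-k\Delta}\bigl(\tanh\Delta\coth\Theta\,\sinh k\Theta+\cosh k\Theta\bigr)=\tfrac12\bigl(\tanh\Delta\coth\Theta+1\bigr)e^{-k(\Delta-\Theta)}\bigl(1+o(1)\bigr),$$
where the leading constant is finite and positive for each fixed $\mu\in(\rho,1)$, giving $e^{k\delta}R_k(\mu)\to0$ for $\delta<\Delta-\Theta$. The Nesterov case is the same computation with $(\Delta,\Theta)$ replaced by $(\Lambda,\Psi)$, except for the extra factor $\mu^{k/2}$; since $0<\mu<1$ this factor is bounded by $1$ (it only accelerates the decay), so $|N_k(\mu)|\le C e^{-k(\Lambda-\Psi)}$ for large $k$ and $e^{k\delta}N_k(\mu)\to0$ whenever $\delta<\Lambda-\Psi$.

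There is no deep obstacle here once Theorem~\ref{thm:poly} is available; the argument is essentially mechanical. The only points needing care are (i) checking that $\Theta,\Psi,\Delta,\Lambda$ are genuinely well defined and strictly positive on the open interval, so that the branch formulas apply, and (ii) noting that the multiplicative constants (e.g.\ $\coth\Theta$) blow up as $\mu\downarrow\rho$, so the convergence is pointwise in $\mu$ rather than uniform --- which is consistent with the worst-case eigenvalue $\mu=\rho$ of Theorem~\ref{thm:uppbound} sitting exactly at the boundary. A uniform statement on compact subsets $\mu\in[\rho+\epsilon,1-\epsilon]$ would follow by bounding both the constants and the gaps uniformly there, but is not needed for the claim as stated.
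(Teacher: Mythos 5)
Your proposal is correct and follows essentially the same route as the paper: substitute the hyperbolic-branch formulas from Theorem~\ref{thm:poly}, factor out the dominant exponential $e^{-k(\Delta-\Theta)}$ (resp.\ $e^{-k(\Lambda-\Psi)}$, with the harmless extra factor $\mu^{k/2}\le 1$ for Nesterov), and let $k\to\infty$. If anything you are slightly more careful than the paper, whose displayed bound for $R_k$ implicitly uses $\tanh\Delta\coth\Theta\le 1$ even though this product exceeds $1$ for $\mu<1$; keeping that $k$-independent constant explicit, as you do, is the cleaner way to reach the same limit.
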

\begin{remark} Since both $\Theta$ and $\Psi$ depend on $\mu$, the spectral-level convergence rates should also depend on $\mu$, with smaller $\mu$ (correspondingly larger Hessian-eigenvalue) having relatively faster convergence rate.
\end{remark}
Compare to the exponential spectral-level convergence in strongly convex regime, as in Theorem~\ref{thm:expconv1}, this exponential convergence is not uniform on this regime, since the range of valid $\delta$ shrinks to 0 as $\mu\to 1$.
\begin{figure}
    \centering
    \includegraphics[width=8cm]{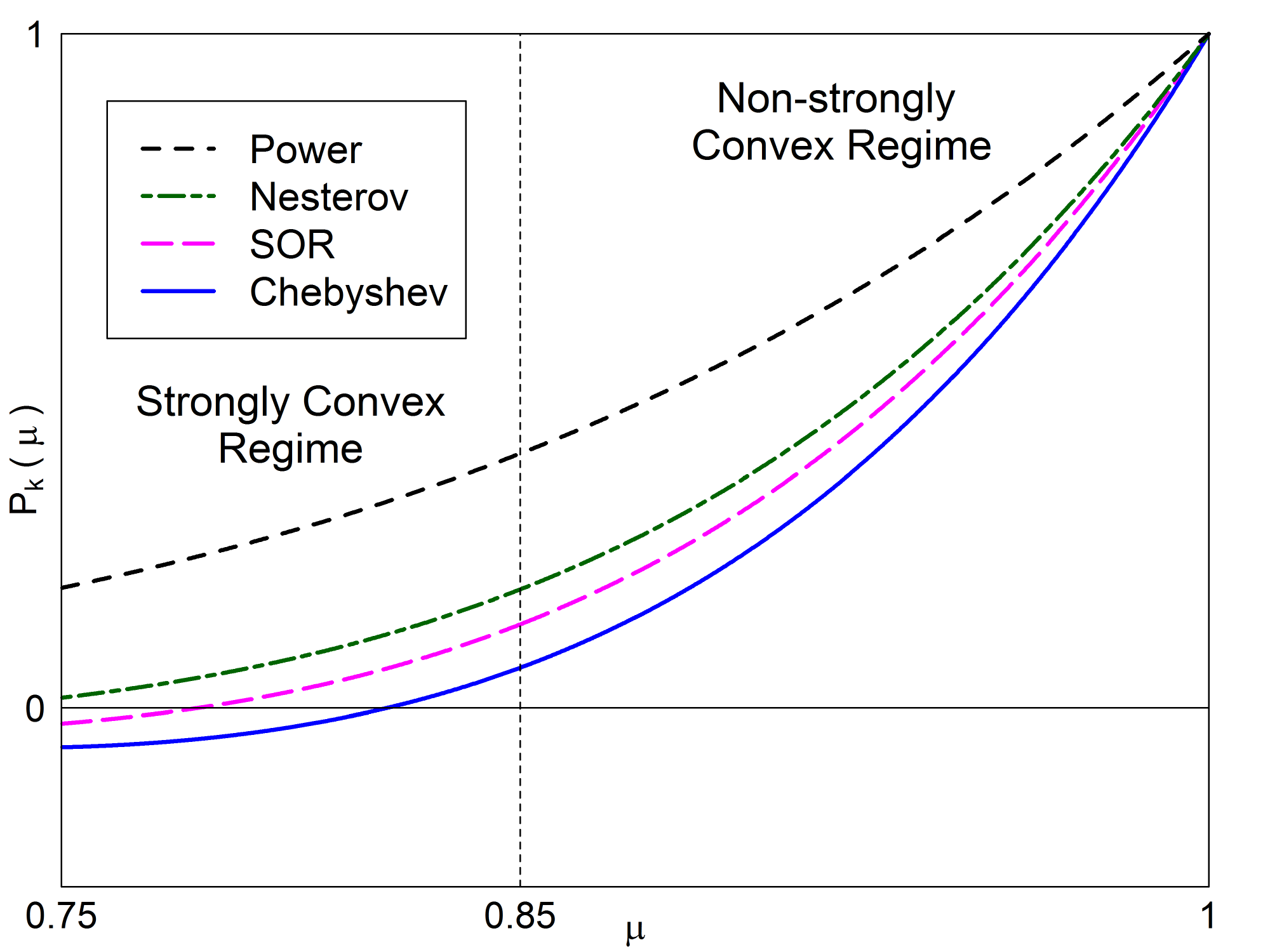}
    \caption{Example curves showing convergence behaviour in the non-strongly convex regime: $\rho = 0.85, k=6$, illustrating Theorem~\ref{thm:nonstronglycomp}. The vertical dash line separates the non-strongly convex and strongly convex regimes.}
    \label{fig:2}
\end{figure}\\

\noindent{\bf Comparison of algorithms.} We also compare the performance of these accelerated class algorithms in the non-strongly convex regime.
\begin{thm}[Comparison of algorithms]
In the non-strongly convex regime, i.e. $\forall \mu\in(\rho,1)$, we have
\begin{eqnarray}
    \textnormal{(a):} & & 0<P_k^*(\mu)< R_k(\mu)< \mu^k < 1,\nonumber\\
    \textnormal{(b):} & & 0<N_k(\mu)< \mu^k < 1.\nonumber
\end{eqnarray}\label{thm:nonstronglycomp}\end{thm}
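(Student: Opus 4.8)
The plan is to treat the four strict inequalities in (a) and (b) one at a time, reading the closed forms off Theorem~\ref{thm:poly} and reducing each comparison either to a manifest positivity statement or to an elementary one‑variable monotonicity. Throughout, $\mu\in(\rho,1)$ forces $\mu/\rho>1$ and $1/\rho>1$, hence $0<\Theta<\Delta$ and $0<\Psi<\Lambda$; every hyperbolic term appearing in the expressions for $R_k$ and $N_k$ is then strictly positive, so $R_k(\mu)>0$ and $N_k(\mu)>0$, while $P_k^*(\mu)=C_k(\mu/\rho)/C_k(1/\rho)=\cosh(k\Theta)/\cosh(k\Delta)>0$ using $C_k(\cosh t)=\cosh(kt)$, and $\mu^k<1$ is immediate from $0<\mu<1$. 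So it remains to prove $P_k^*(\mu)<R_k(\mu)<\mu^k$ and $N_k(\mu)<\mu^k$; these are strict for $k\ge 2$ and degenerate to equality (all four quantities equal $\mu$) at $k=1$.

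\textbf{The comparison $R_k(\mu)<\mu^k$ via a generating function.} Fix the eigenvalue $\mu$ and set $G(z)=\sum_{k\ge 0}R_k(\mu)z^k$. Feeding the recurrence~(\ref{eq:richardsonrec}) with $R_0=1$, $R_1=\mu$ into the series gives
\[
G(z)=\frac{1-(\gamma-1)\mu z}{1-\gamma\mu z+(\gamma-1)z^2}=\frac{1-(\gamma-1)\mu z}{(1-r_+z)(1-r_-z)},
\]
where $r_\pm$ are the roots of $r^2-\gamma\mu r+(\gamma-1)=0$; since their sum is $\gamma\mu$ and their product is $\gamma-1=e^{-2\Delta}$, they are exactly $r_\pm=e^{-\Delta\pm\Theta}>0$. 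Subtracting $G$ from $\sum_k\mu^k z^k=(1-\mu z)^{-1}$ and simplifying the numerator yields the identity
\[
\sum_{k\ge 0}\bigl(\mu^k-R_k(\mu)\bigr)z^k=\frac{(\gamma-1)(1-\mu^2)\,z^2}{(1-\mu z)(1-r_+z)(1-r_-z)}.
\]
The right‑hand side is the positive scalar $(\gamma-1)(1-\mu^2)$ — here $\gamma>1$ and $\mu<1$ — times $z^2$ times a product of three geometric series with positive ratios $\mu,r_+,r_-$; hence all of its Taylor coefficients are nonnegative, and strictly positive for $k\ge 2$. Therefore $R_k(\mu)<\mu^k$ for $k\ge 2$.

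\textbf{The comparison $P_k^*(\mu)<R_k(\mu)$, and then $N_k(\mu)<\mu^k$.} Using $P_k^*(\mu)=\cosh(k\Theta)/\cosh(k\Delta)$, the explicit form of $R_k$, and $e^{k\Delta}-\cosh(k\Delta)=\sinh(k\Delta)$, the inequality $P_k^*(\mu)<R_k(\mu)$ rearranges to $\coth(k\Theta)\tanh(k\Delta)<\coth(\Theta)\tanh(\Delta)$. Putting $F(\kappa)=\coth(\kappa\Theta)\tanh(\kappa\Delta)$ one gets $\tfrac{d}{d\kappa}\log F(\kappa)=\tfrac{2\Delta}{\sinh(2\kappa\Delta)}-\tfrac{2\Theta}{\sinh(2\kappa\Theta)}<0$, because $x\mapsto x/\sinh(2\kappa x)$ is strictly decreasing on $(0,\infty)$ (its derivative has numerator $\sinh t-t\cosh t<0$) and $\Delta>\Theta$; thus $F$ is strictly decreasing and $F(k)<F(1)$ for $k\ge 2$, which is the claim. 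For Nesterov's polynomial, comparing the expressions for $N_k$ and $R_k$ and the angle definitions shows $N_k(\mu)=\mu^{k/2}R_k^{(\sqrt{\rho})}(\sqrt{\mu})$, where $R_k^{(\sqrt{\rho})}$ is the SOR polynomial built with parameter $\sqrt{\rho}$ in place of $\rho$; since $\mu\in(\rho,1)$ gives $\sqrt{\mu}\in(\sqrt{\rho},1)$, the previous step applied with that parameter gives $R_k^{(\sqrt{\rho})}(\sqrt{\mu})<(\sqrt{\mu})^k=\mu^{k/2}$, and multiplying by $\mu^{k/2}>0$ gives $N_k(\mu)<\mu^k$.

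\textbf{Expected main obstacle.} The one genuinely delicate comparison is $R_k<\mu^k$. A direct induction on~(\ref{eq:richardsonrec}), or a naive estimate from the spectral expansion $R_k=c_+r_+^k+c_-r_-^k$, both stall on sign issues: the recurrence coefficient $1-\gamma$ is negative, and in the expansion $c_+>1$ while $c_-<0$, so although the dominant root satisfies $r_+<\mu$ this does not by itself control $R_k$ for small $k$. The generating‑function identity above is precisely the device that converts the inequality into a manifest nonnegativity‑of‑coefficients statement; identifying that closed form is the crux, after which everything reduces either to it or to the elementary monotonicity of $x\mapsto x/\sinh(2\kappa x)$.
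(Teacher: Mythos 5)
Your proof is correct, and both of the substantive comparisons are handled by routes genuinely different from the paper's. For $R_k(\mu)<\mu^k$ the paper introduces $g_1(\theta)=\cosh k\theta/\cosh^k\theta$ and $g_2(\theta)=\cosh\theta\sinh k\theta/(\sinh\theta\cosh^k\theta)$, checks both are increasing, and compares a suitable combination at $\Theta<\Delta$; your generating-function identity
\[
\sum_{k\ge 0}\bigl(\mu^k-R_k(\mu)\bigr)z^k=\frac{(\gamma-1)(1-\mu^2)\,z^2}{(1-\mu z)(1-r_+z)(1-r_-z)},\qquad r_\pm=e^{-\Delta\pm\Theta}>0,
\]
which checks out (the $z^1$ term of the numerator cancels, and $r_+r_-=\gamma-1=e^{-2\Delta}$, $r_++r_-=\gamma\mu$), replaces that with a single manifest positivity-of-coefficients statement and needs no calculus. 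For $P_k^*(\mu)<R_k(\mu)$ the paper subtracts the closed forms and factors the difference, via product-to-sum identities, into a positive prefactor times $\sinh(k+1)\Delta/\sinh(k-1)\Delta-\sinh(k+1)\Theta/\sinh(k-1)\Theta$, then shows $x\mapsto\sinh(k+1)x/\sinh(k-1)x$ is increasing; your reduction to the monotone decrease of the interpolated quantity $\coth(\kappa\Theta)\tanh(\kappa\Delta)$ in the iteration count is equivalent in spirit (both rest on $t\mapsto t/\sinh(ct)$ decreasing) but shorter, since it avoids the factorization step entirely. Your argument for $N_k(\mu)<\mu^k$ is the same substitution $\mu\to\sqrt{\mu}$, $\rho\to\sqrt{\rho}$ the paper uses. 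Two points in your favor: you state explicitly that at $k=1$ all four quantities equal $\mu$ and strictness only holds for $k\ge 2$, a caveat the paper's own factorizations (which carry vanishing $\sinh(k-1)\Theta$ factors at $k=1$) implicitly require but never mention; and your closing remark correctly identifies why naive induction on the recurrence stalls. One cosmetic caveat: your reuse of $\kappa$ as the continuous interpolation variable collides with the paper's condition number, so rename it if this is ever incorporated.
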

\begin{remark}
Recall that $\mu^k$ is the polynomial expression of power method (ordinary gradient descent), which we list here for comparison.
\end{remark}

Part (a) of Theorem~\ref{thm:nonstronglycomp} gives an ordering of Chebyshev, SOR and power methods, in the non-strongly convex regime. Part (b) shows that Nesterov's AGD also converge faster than power method in this regime. From the theorem, we get the following message: in the non-strongly convex regime, the accelerated class algorithms always converge faster than power method (ordinary gradient descent).

Figure~\ref{fig:2} briefly illustrates the results of Theorem~\ref{thm:nonstronglycomp}. 

We currently do not have direct comparison of Nesterov's AGD with Chebyshev and SOR methods, but based on Theorem~\ref{thm:nonstrongly}, it is reasonable to conjecture that Nesterov's AGD, at least asymptotically, converges slower than the other two methods.

\subsection{Choosing Different Acceleration Parameters}
Noting that different choices of acceleration parameter result different polynomials, we use superscript $[i], i\in\{1,2\}$, to distinguish this difference.
\begin{thm}[Effect of choosing different parameters]\label{thm:choosepara}
Let  $0<\rho_1<\rho_2<1$, then $\forall  P_k \in \{P_k^*, R_k, N_k\}, \forall k>1:$
\begin{eqnarray}
 & &Ch_{\rho_1}(P_k^{[1]}) < Ch_{\rho_2}(P_k^{[2]}); \nonumber\\ 
 & &\forall \mu>\rho_2, \ P_k^{[1]}(\mu)>P_k^{[2]}(\mu).\nonumber
\end{eqnarray}
\end{thm}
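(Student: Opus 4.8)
## Proof Proposal

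The plan is to prove the two inequalities in Theorem~\ref{thm:choosepara} by exploiting the explicit expressions for $P_k^*$, $R_k$, and $N_k$ from Theorem~\ref{thm:poly}, together with the closed forms of the Chebyshev numbers from Theorem~\ref{thm:uppbound}. The key structural fact to carry through is that each of the quantities $\Delta = \cosh^{-1}(1/\rho)$ and $\Lambda = \cosh^{-1}(\sqrt{1/\rho})$ is a strictly decreasing function of $\rho$ on $(0,1)$; so $\rho_1 < \rho_2$ gives $\Delta_1 > \Delta_2$ and $\Lambda_1 > \Lambda_2$. Likewise, for a fixed $\mu$ lying in the non-strongly convex regime of \emph{both} parametrizations (i.e. $\mu > \rho_2 > \rho_1$), the angles $\Theta = \cosh^{-1}(\mu/\rho)$ and $\Psi = \cosh^{-1}(\sqrt{\mu/\rho})$ also shrink as $\rho$ grows, so $\Theta_1 > \Theta_2$ and $\Psi_1 > \Psi_2$.

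For the first inequality, $Ch_{\rho_1}(P_k^{[1]}) < Ch_{\rho_2}(P_k^{[2]})$, I would use Eq.~(\ref{ex}). For the Chebyshev method this is immediate: $Ch_{\rho}(P_k^*) = 1/\cosh(k\Delta)$ is strictly decreasing in $\Delta$, hence strictly increasing in $\rho$. For SOR and Nesterov one checks that $g(\Delta) := e^{-k\Delta}(k\tanh\Delta + 1)$ and the analogous $h(\Lambda) := \rho^{k/2}e^{-k\Lambda}(k\tanh\Lambda+1)$ are strictly decreasing in their respective angle arguments for $k \ge 1$ — this reduces to a one-variable calculus check (differentiate; the exponential decay dominates the polynomial-in-$\tanh$ growth, since $\tanh' \le 1$). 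For Nesterov there is the extra subtlety that the prefactor $\rho^{k/2}$ also depends on $\rho$, but $\rho^{k/2}$ is \emph{increasing} in $\rho$, so one must confirm that the decrease of $e^{-k\Lambda}(k\tanh\Lambda+1)$ wins; rewriting $\rho = 1/\cosh^2\Lambda = \operatorname{sech}^2\Lambda$ turns $Ch_\rho(N_k)$ into a single explicit function of $\Lambda$ alone, after which the monotonicity is again a derivative computation.

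For the second inequality, $P_k^{[1]}(\mu) > P_k^{[2]}(\mu)$ for all $\mu > \rho_2$, I would work regime by regime using the $\mu \in (\rho,1]$ branch of~(\ref{eq:expressionsor}) and~(\ref{eq:expressionnesterov}). For SOR, $R_k(\mu) = e^{-k\Delta}(\tanh\Delta\coth\Theta\sinh k\Theta + \cosh k\Theta)$; I want to show this is decreasing in $\rho$ with $\mu$ fixed. The cleanest route is to note that $k\Delta - k\Theta = k\cosh^{-1}(1/\rho) - k\cosh^{-1}(\mu/\rho)$ and to track how $R_k$ depends on $\rho$ through $\Delta$ and $\Theta$ jointly; since both $\Delta$ and $\Theta$ decrease as $\rho$ increases while $\Delta - \Theta$ also behaves monotonically, one expects the hyperbolic combination to decrease — but this is the step I expect to be the main obstacle, since $R_k(\mu)$ depends on $\rho$ through \emph{two} coupled angles and the sign of the derivative is not visibly obvious. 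A promising reduction is to fix $\Theta$ as the free variable (it determines $\rho = \mu/\cosh\Theta$, hence $\Delta$) and show $\partial R_k/\partial\Theta > 0$; alternatively, one can compare the two difference equations~(\ref{eq:richardsonrec}) directly and argue by induction on $k$ that $R_k^{[1]}(\mu) - R_k^{[2]}(\mu) > 0$, using that $\gamma^{[1]} = 2/(1+\sqrt{1-\rho_1^2}) < 2/(1+\sqrt{1-\rho_2^2}) = \gamma^{[2]}$ and that in this regime all iterates stay positive and increasing in $k$. The Nesterov case is handled identically with $\Lambda,\Psi$ in place of $\Delta,\Theta$ and the harmless extra factor $\mu^{k/2}$ (which does not depend on $\rho$), and the Chebyshev case follows from $P_k^*(\mu) = C_k(\mu/\rho)/C_k(1/\rho)$: since $C_k$ is increasing and log-convex on $[1,\infty)$, decreasing $\rho$ scales both arguments up by the same factor but the ratio increases — a standard monotonicity-of-ratios argument. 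Throughout, I would lean on the induction-on-$k$ comparison of recurrences as the robust fallback whenever the direct hyperbolic estimates get unwieldy.
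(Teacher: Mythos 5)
Your Part~1 follows the paper's route: both reduce the first inequality to showing that $Ch_{\rho}(P_k)$ is increasing in $\rho$ via the chain rule through $\Delta$ (resp.\ $\Lambda$), and your one-variable derivative checks are exactly the computations in the paper. One correction there: for $N_k$ the ``competition'' you worry about does not exist. Since $\Lambda$ is decreasing in $\rho$ and $e^{-k\Lambda}(k\tanh\Lambda+1)$ is decreasing in $\Lambda$, that factor is \emph{increasing} in $\rho$; so it and $\rho^{k/2}$ move in the same direction and the product is trivially increasing, which is precisely how the paper argues.

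The genuine gaps are in Part~2, which you correctly flag as the main obstacle but do not close. (i) For the Chebyshev method, log-convexity of $C_k$ on $[1,\infty)$ is not the right property for your scaling argument (and it in fact fails near $x=1$, where $(\log C_k)''(1)=-k^2(2k^2+1)/3<0$). What the ``scale both arguments by $\rho_2/\rho_1$'' argument actually requires is that $\log C_k$ be concave in $\log x$, equivalently that $xC_k'(x)/C_k(x)=k\coth\Theta\tanh k\Theta$ be decreasing in $\Theta$; this reduces to $\sinh 2k\Theta\ge k\sinh 2\Theta$, which is exactly the inequality the paper isolates through its auxiliary function $h(x)=\sinh((k+1)x)/\sinh((k-1)x)$ being increasing. (ii) Your fallback induction on the recurrence for $R_k$ very likely does not close: the momentum coefficient $1-\gamma$ is negative and $\gamma^{[1]}<\gamma^{[2]}$, so the induction hypotheses $R_k^{[1]}>R_k^{[2]}>0$ and $R_{k-1}^{[1]}>R_{k-1}^{[2]}>0$ do not determine the sign of $\gamma^{[1]}\mu R_k^{[1]}-\gamma^{[2]}\mu R_k^{[2]}+(1-\gamma^{[1]})R_{k-1}^{[1]}-(1-\gamma^{[2]})R_{k-1}^{[2]}$; the two pairs of terms pull in opposite directions. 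The paper instead differentiates $R_k(\mu)$ in $\rho$ directly, letting both $\Delta$ and $\Theta$ vary, and the sign falls out of the identity $\coth^2x-1/\sinh^2x=1$ combined with $\tanh k\Theta<k\tanh\Theta$. Your ``fix $\Theta$ as the free variable'' idea is in the right spirit, but you would still need those two ingredients to obtain a definite sign. The Nesterov case then follows from the SOR case by the substitution $\mu\to\sqrt{\mu}$, $\rho\to\sqrt{\rho}$, as you note.
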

\begin{figure}
    \centering
    \includegraphics[width=8cm]{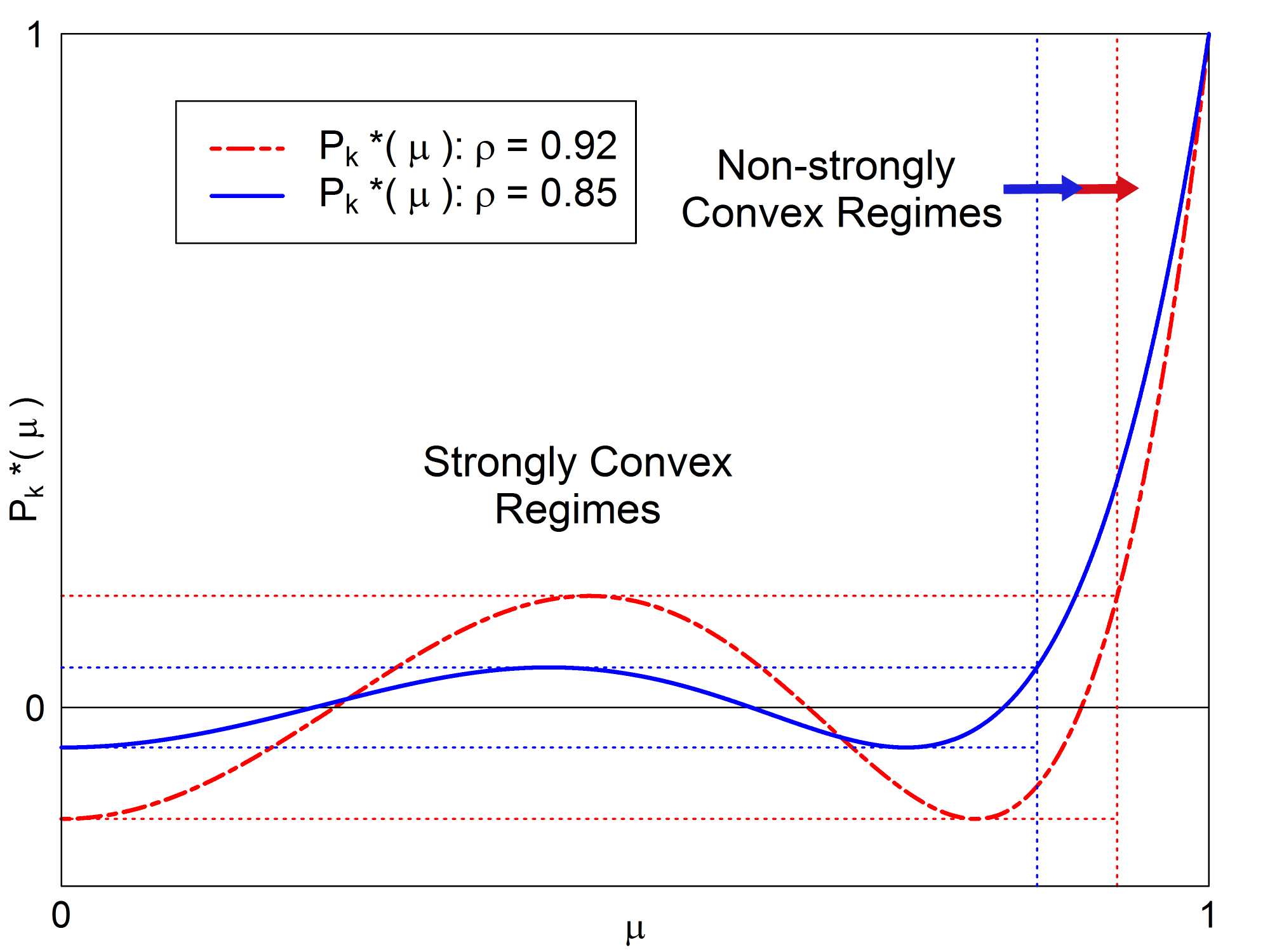}
    \caption{Illustration of choosing different acceleration parameters: plot shows curves of Chebyshev method, when setting $\rho=0.85$ (blue) and $\rho=0.92$ (red), respectively. Vertical dash lines indicate the position of boundaries of regimes, (i.e. $\mu=\rho$), and horizontal dash lines indicate $\pm$  of Chebyshev numbers.}
    \label{fig:my_label}
\end{figure}
Figure~\ref{fig:my_label} illustrates this theorem, see caption for details.

Loosely speaking, this theorem states that smaller $\rho$ tends to: (a) accelerate the convergence in strongly convex regime, $\mu\in [0,\rho_1]$, by lowering the corresponding Chebyshev number; and (b) slow down convergence in the non-strongly convex regime, $\mu\in(\rho_2,1)$. However, readers should be aware that changing parameter $\rho$ will also change the partition of the regimes. This effect is also shown in  Figure~\ref{fig:my_label}.

{\small
\bibliographystyle{plain}
\bibliography{main}
}

\clearpage

\appendix
\section{Appendix: Proof of Theorems}

\begin{lemma}\label{lemma:sin}
\begin{equation}\forall \theta \in [0,\pi/2],\quad |\sin k\theta| \le k\sin \theta,\quad k\in \mathbb{N}.\end{equation}
\end{lemma}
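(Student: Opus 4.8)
The plan is to argue by induction on $k$, using the sine angle-addition formula. For the base cases, $k=0$ gives $|\sin 0| = 0 \le 0 = 0\cdot\sin\theta$, and $k=1$ gives $|\sin\theta| = \sin\theta \le 1\cdot\sin\theta$, where we use that $\sin\theta \ge 0$ for $\theta\in[0,\pi/2]$ so that $|\sin\theta| = \sin\theta$.

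For the inductive step, assume $|\sin k\theta| \le k\sin\theta$ for some $k\ge 1$. Writing $\sin((k+1)\theta) = \sin(k\theta)\cos\theta + \cos(k\theta)\sin\theta$ and taking absolute values,
\[
|\sin((k+1)\theta)| \le |\sin k\theta|\,|\cos\theta| + |\cos k\theta|\,|\sin\theta| \le |\sin k\theta| + \sin\theta \le k\sin\theta + \sin\theta = (k+1)\sin\theta,
\]
where the second inequality uses $|\cos\theta|\le 1$, $|\cos k\theta|\le 1$, and again $|\sin\theta| = \sin\theta \ge 0$ on $[0,\pi/2]$. This completes the induction.

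There is essentially no real obstacle here; the only point requiring minor care is the restriction $\theta\in[0,\pi/2]$, which is exactly what guarantees $\sin\theta\ge 0$ so that the right-hand side $k\sin\theta$ is a genuine upper bound for an absolute value (the statement would fail on, e.g., $[0,\pi]$ where $\sin\theta$ can be small while $\sin k\theta$ is not). The bounds $|\cos\theta|\le 1$ and $|\cos k\theta|\le 1$ hold unconditionally, so nothing else is needed.
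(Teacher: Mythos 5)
Your proof is correct, but it takes a genuinely different route from the paper's. The paper splits $[0,\pi/2]$ into $[0,\tfrac{\pi}{2k}]$ and $[\tfrac{\pi}{2k},\tfrac{\pi}{2}]$: on the first piece it compares derivatives, $(\sin k\theta)' = k\cos k\theta \le k\cos\theta = (k\sin\theta)'$, together with equality at $\theta=0$; on the second piece it bounds $|\sin k\theta|\le 1 = \sin(k\cdot\tfrac{\pi}{2k}) \le k\sin\tfrac{\pi}{2k}\le k\sin\theta$. Your induction via the angle-addition formula avoids the case split entirely and is, if anything, cleaner; it also establishes the stronger statement $|\sin k\theta|\le k|\sin\theta|$ for \emph{all} real $\theta$, with the hypothesis $\theta\in[0,\pi/2]$ needed only to replace $|\sin\theta|$ by $\sin\theta$ on the right-hand side. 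One small correction to your closing remark: the inequality does not in fact fail on $[0,\pi]$ — there $\sin\theta\ge 0$ still holds, and your own induction gives $|\sin k\theta|\le k|\sin\theta| = k\sin\theta$ on all of $[0,\pi]$; the restriction to $[0,\pi/2]$ is simply what the paper needs for its application, where $\theta=\cos^{-1}(\mu/\rho)$ with $\mu\in[0,\rho]$. This does not affect the validity of your proof.
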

\begin{proof}
Obviously, the lemma hold for $n=1$.
In the following, we assume $n\ge 2$.

First consider the interval $[0,\frac{\pi}{2k}]$: 

$\forall \theta\in [0,\frac{\pi}{2k}]$, both $\sin k \theta$ and  $\sin k \theta$ are positive, and $\cos k\theta \le \cos \theta$, because of the monotonicity of $\cos \theta$ on $[0,\pi/2]$ and $0\le \theta\le k\theta\le \pi/2$. Since
\begin{eqnarray}
   \left(\sin k \theta\right)' = k\cos k\theta,\quad
    \left(k\sin \theta\right)' = k \cos \theta,
\end{eqnarray}
then $\left(\sin k \theta\right)'\le \left(k\sin \theta\right)'$. Combining with the fact that 
\begin{equation}
    \sin k \theta|_{\theta=0} = k\sin\theta|_{\theta=0} = 0,
\end{equation}
one can conclude that 
\begin{equation}
    \forall \theta \in [0,\frac{\pi}{2k}],\quad |\sin k\theta| \le k\sin \theta.\label{eqpf:lemma1}
\end{equation}
Then, we consider the interval $[\frac{\pi}{2k}, \frac{\pi}{2}]$:

 $\forall \theta\in[\frac{\pi}{2k}, \frac{\pi}{2}]$, we have
\begin{equation}
    |\sin k\theta| \le 1 = \sin k\frac{\pi}{2k} \le k \sin \frac{\pi}{2k} \le k \sin \theta.
\end{equation}
where we used Eq.(\ref{eqpf:lemma1}) for the second inequality and monotonicity of $\sin \theta$ on
$[\frac{\pi}{2k}, \frac{\pi}{2}]$ for the last inequality.

Hence, we conclude the lemma.
\end{proof}

\subsection{Proof of Theorem~{\ref{thm:poly}}}\label{sec:pfthm2}
\begin{proof}
To solve the recurrence relations, we follow the technique for solving linear difference equations. \\

{\bf Second order Richardson case.} The corresponding recurrence relation is Eq.(\ref{eq:richardsonrec}):
\begin{eqnarray}
  \xi_{k+1} = \gamma B \xi_k + (1-\gamma){\xi}_{k-1}, \quad k\ge 1;\quad  \xi_1 = B\xi_0.\nonumber
\end{eqnarray}
Now, we define auxiliary polynomials $Q_k(B)$ which satisfies
\begin{eqnarray}
  Q_{k+1}(B) = \gamma B Q_{k}(B) + (1-\gamma) Q_{k-1}(B), \ k\ge 1;\quad Q_1(B) = \gamma B; \quad Q_0(B) = I.\label{pfeq:1}
\end{eqnarray}
Note that, not like in Eq.(\ref{eq:richardsonrec}), we set $Q_1(B) = \gamma B$ instead of $B$. 

By induction, one can easily verify that $Q_k(B)$ is a polynomial in $B$ of degree $k$, and that
\begin{equation}
    \xi_{k} = Q_{k-1}(B)\xi_1 + (1-\gamma)Q_{k-2}(B)\xi_0, \ \xi \ge 2.\label{pfeq:4}
\end{equation}
Replace operator $B$ in Eq.(\ref{pfeq:1}) by scalar variable $x$, and then we utilize the standard technique for solving linear difference equations: consider $Q_k(x)$ as $k$-th power of $q(x)$, then
\begin{eqnarray}
q^{k+1}(x) = \gamma x q^{k}(x) + (1-\gamma) q^{k-1}(x), \ k\ge 1, \label{pfeq:2}
\end{eqnarray}
and 
\begin{equation}
    \quad q^1(x) = \gamma x; \quad q^0(x) = 1.\label{pfeq:3}
\end{equation}
Eq.(\ref{pfeq:2}) reduces to the following quadratic form
\begin{equation}
    q^2(x) = \gamma x q(x) + (1-\gamma),\label{pfeq:5}
\end{equation}
which has two roots $q_{\pm}(x)$. The general solution would be 
\begin{equation}
    Q_k(x) = c_1 q_+^k(x) + c_2 q_-^k(x),
\end{equation}
where coefficients $c_1$ and $c_2$ are determined by the initial condition Eq.(\ref{pfeq:3}).

For this particular case: when $0\le x< \rho$, the roots $q_{\pm}(x)$ are complex, and have the form $(\gamma-1)^{1/2}\exp(\pm i \theta)$, where $\cos\theta = x/\rho$; when $\rho \le x\le 1$,  $q_{\pm}(x)$ are real and have the form $(\gamma-1)^{1/2}\exp(\pm  \Theta)$, where $\cos\Theta = x/\rho$.

Then, after algebraic manipulations, we have
\begin{equation}
    Q_{k}(x) = (\gamma-1)^{k/2}\left\{\begin{array}{ll}
    \frac{\sin (k+1)\theta}{\sin \theta}  &  \textnormal{if } 0\le x< \rho, \\
    k+1                 &     \textnormal{if } x= \rho,   \\
    \frac{\sinh (k+1)\Theta}{\sinh \Theta}  &    \textnormal{if } \rho <x\le 1.
    \end{array}
    \right.
\end{equation}
Using Eq.(\ref{pfeq:4}) and noting that $\xi_k = R_k(B)\xi_0$, after some  algebraic manipulation, we have the expression for $R_k$ as shown in the Theorem.\\

{\bf Nesterov's AGD case.} The proof for the case of Nesterov's AGD is very analogous to that of second-order Richardson, so we omit some unnecessary steps. In this case, the auxiliary polynomials $Q_k(B)$ now satisfies
\begin{equation}
Q_{k+1}(B) = \gamma' B Q_{k}(B) + (1-\gamma')B Q_{k-1}(B), \ k\ge 1;\quad Q_1(B) = \gamma' B; \quad Q_0(B) = I.
\end{equation}
Please note the appearance of the additional $B$ in the term of $Q_{k-1}$, and the differently defined parameter $\gamma' = 2/(1+\sqrt{1-\rho})$.

Therefore, $q(x)$ now satisfies, instead of Eq.(\ref{pfeq:5}),
\begin{equation}
q^2(x) = \gamma' x q(x) + (1-\gamma')x.
\end{equation}
Then, $Q_k(x)$ is in turn
\begin{equation}
     Q_{k}(x) = (\gamma'-1)^{k/2}x^{k/2}\left\{\begin{array}{ll}
    \frac{\sin (k+1)\psi}{\sin \psi}  &  \textnormal{if } 0\le x< \rho, \\
    k+1                 &     \textnormal{if } x= \rho,   \\
    \frac{\sinh (k+1)\Psi}{\sinh \Psi}  &    \textnormal{if } \rho <x\le 1,
    \end{array}
    \right.
\end{equation}
where $\cos \psi = \sqrt{x/\rho}, x\in[0,\rho)$ and $\cosh \Psi = \sqrt{x/\rho}, x\in(\rho,1]$, as defined in Section~\ref{sec:representation}.

By induction, we can show that in this case
\begin{equation}
    \xi_{k} = Q_{k-1}(B)\xi_1 + (1-\gamma')B Q_{k-2}(B)\xi_0, \ k \ge 2.\label{pfeq:42}
\end{equation}
Noting that $\xi_k = N_k(B)\xi_0$, we can have the expression for $N_k$ as shown in the theorem.
\end{proof}

\subsection{Proof of Theorem~\ref{thm:uppbound}}
\begin{proof}
It is enough to show that,
\begin{equation}
    \forall \mu \in [0,\rho], |P_k(\mu)| \le P_k(\rho).
\end{equation}
Let's prove case by case:\\

\noindent {\bf Chebyshev.}  Note that $0\le \mu \le \rho$. According to Eq.(\ref{eq:chebpolynomial}) and (\ref{eq:chebyshev}),
\begin{equation}
    |P_k(\mu)|  = \frac{|\cos (k \cos^{-1}(\mu/\rho))|}{\cosh (k\cosh^{-1}(1/\rho))} \le \frac{1}{\cosh (k\cosh^{-1}(1/\rho))} = P_k(\rho).
\end{equation}

\noindent {\bf Second-order Richardson.} Since $0\le \mu \le \rho$ and $\theta = \cos^{-1}(\mu/\rho)$, then $\theta\in[0,\pi/2]$. According to Eq.(\ref{eq:expressionsor}),
\begin{eqnarray}
    |R_k(\mu)| &=&  \rho\exp (-k\Delta)\cdot |\sinh \Delta \cos \theta \frac{\sin k\theta}{\sin \theta}+ \cosh\Delta \cos k \theta| \nonumber\\
    &\le& \rho\exp (-k\Delta) \cdot \left(|\sinh \Delta \cos \theta \frac{\sin k\theta}{\sin \theta}| + |\cosh\Delta \cos k \theta|   \right) \nonumber\\
    &\le& \rho\exp (-k\Delta) \cdot \left( \sinh \Delta \frac{|\sin k\theta|}{\sin \theta} + \cosh\Delta       \right) \nonumber\\
    &\le& \rho\exp (-k\Delta) \cdot \left( k\sinh \Delta  + \cosh\Delta       \right), 
\end{eqnarray}
where the last inequality holds true because of Lemma~\ref{lemma:sin}. 

One the other hand, when $\mu=\rho$, the angle $\theta=0$, thus
\begin{equation}
    R_k(\rho) = \rho\exp (-k\Delta)\left( k\sinh \Delta  + \cosh\Delta       \right). 
\end{equation}
Combining the above two equations, we conclude the theorem for second-order Richardson case.\\

\noindent {\bf Nesterov's AGD.} This argument is similar to the second-order Richardson case. The angle $\psi = \cos^{-1}(\sqrt{\mu/\rho})$ is in the interval $[0,\pi/2]$. According to Eq.(\ref{eq:expressionnesterov}),
\begin{eqnarray}
    |N_k(\mu)| &=& \mu^{k/2}\sqrt{\rho}\exp(-k\Lambda)\cdot|\sinh \Lambda \cos \psi \sin k\psi/\sin \psi+ \cosh\Lambda \cos k \psi|\nonumber\\
    &\le& \mu^{k/2}\sqrt{\rho}\exp(-k\Lambda)\cdot\left( |\sinh \Lambda \cos \psi \sin k\psi/\sin \psi |+ |\cosh\Lambda \cos k \psi|         \right)\nonumber\\
    &\le& \mu^{k/2}\sqrt{\rho}\exp(-k\Lambda)\cdot\left( \sinh \Lambda  |\sin k\psi/\sin \psi |+ \cosh\Lambda         \right)\nonumber\\
    &\le& \rho^{k/2}\sqrt{\rho}\exp(-k\Lambda)\cdot\left( k\sinh \Lambda+ \cosh\Lambda         \right)\nonumber\\
    &=&N_k(\rho),
\end{eqnarray}
where we applied Lemma~\ref{lemma:sin} again in the last inequality.
\end{proof}

 \subsection{Proof of Theorem~{\ref{thm:comparebound}}}
 We show that $Ch_{\rho}(R_k) < Ch_{\rho}(N_k)$ here. 
 And the rest statements are noting else but a limiting case of Theorem~\ref{thm:nonstronglycomp}, when setting angles $\Theta,\Psi\to 0$. Please see proof of Theorem~\ref{thm:nonstronglycomp} in Section \ref{section:app10}.
 
 Consider function $g(x)$ defined as, for any $k\ge 2$,
 \begin{equation}
     g(x) = \cosh^k x e^{-kx}(k\tanh x+1), \  x\in (0,\infty).
 \end{equation}
 Function $g(x)$ is monotonically decreasing in its domain, because \begin{eqnarray}
     g'(x) &=& k\cosh^{k-1} x\sinh x \cdot e^{-kx}(k\tanh x + 1)\nonumber\\
     & & + \cosh^{k} x \cdot (-k) e^{-kx}(k\tanh x + 1)\nonumber\\
     & & + \cosh^k x e^{-kx}\cdot k\frac{1}{\cosh^2x}\nonumber\\
     &=& k\cosh^{k-2}x e^{-kx}\left(k\sinh^x + \sinh x\cosh x -  k\sinh x\cosh x - \cosh^2x +1\right)\nonumber\\
     &=& k(k-1)e^{-kx}\cosh^{k-2}x\sinh x (\sinh x - \cosh x)\nonumber\\
     &<& 0.
 \end{eqnarray}
 By definition of $\Delta$ and $\Lambda$, we can see that $\Lambda < \Delta$, hence
 \begin{equation}
     g(\Delta) < g(\Lambda).
 \end{equation}
 Namely,
 \begin{equation}
     \cosh^k \Delta e^{-k\Delta}(k\tanh \Delta+1) < \cosh^k \Lambda e^{-k\Lambda}(k\tanh \Lambda+1).
 \end{equation}
 Multiplying $\rho^k$ on both sides, and noting that $\cosh\Delta = 1/\rho$ and $\cosh\Lambda = 1/\sqrt{\rho}$, we conclude that 
 $Ch_{\rho}(R_k) < Ch_{\rho}(N_k)$.

 \subsection{Proof of Corollary~\ref{cor:2}}
 \begin{proof}
 By theorem~\ref{thm:comparebound}, it suffices to just prove $Ch_{\rho}(N_k) \sim O(\exp(-k/\sqrt{\kappa}))$.
 
 Since $\cosh\Lambda = 1/\sqrt{\rho}$ and $\rho = 1-1/\kappa$, then
 \begin{eqnarray}
 \rho^{k/2}e^{-k\Lambda} = \rho^{k/2}(\cosh \Lambda - \sinh \Lambda)^k
 = \rho^{k/2}\left(\frac{1-\sqrt{1-\rho}}{\sqrt{\rho}}\right)^k = \left(1-\frac{1}{\sqrt{\kappa}}\right)^k.
 \end{eqnarray}
 Noting that $\tanh\Lambda <1$, we have
 \begin{equation}
     Ch_{\rho}(N_k) < \left(1-\frac{1}{\sqrt{\kappa}}\right)^k(k+1).
 \end{equation}
 Therefore, 
 \begin{equation}
      Ch_{\rho}^2(N_k) < \left(1-\frac{1}{\sqrt{\kappa}}\right)^k\cdot (k+1)^2\left(1-\frac{1}{\sqrt{\kappa}}\right)^k.
 \end{equation}
 We note that the first term $(1-1/\sqrt{\kappa})^k \sim O(\exp(-k/\sqrt{\kappa}))$, and the rest $(k+1)^2\left(1-\frac{1}{\sqrt{\kappa}}\right)^k\sim O(1)$. 
 
 Hence we conclude.
 \end{proof}

\subsection{Proof of Theorem~\ref{thm:approrho}}
\begin{proof}
Within the scope of this proof, we denote $\epsilon:= 1-\rho$, for the sake of simplicity. Outside of the scope,  $\epsilon$ could bear other meanings. 

As we know $\cosh \Delta = 1/\rho$, then
\begin{eqnarray}
    e^{\Delta} &=& \frac{1}{\rho}(1+\sqrt{1-\rho^2}) \nonumber\\
    &=&\frac{1}{1-\epsilon}(1+\sqrt{2\epsilon}\sqrt{1-\epsilon/2})\nonumber\\
    &=& (1+\epsilon+ O(\epsilon^2))(1+\sqrt{2\epsilon}+O(\epsilon^{3/2}))\nonumber\\
    &=& 1+\sqrt{2\epsilon}+\epsilon + O(\epsilon^{3/2}).
\end{eqnarray}
Similarly,
\begin{equation}
    e^{-\Delta} = 1-\sqrt{2\epsilon}+\epsilon + O(\epsilon^{3/2}).
\end{equation}
Therefore, for $k\in\mathbb{N}$,
\begin{eqnarray}
    e^{k\Delta} &=& (1+\sqrt{2\epsilon}+\epsilon + O(\epsilon^{3/2}))^k\nonumber\\
    &=& 1+k\sqrt{2\epsilon} + k\epsilon + \frac{k(k-1)}{2}2\epsilon + O(\epsilon^{3/2})\nonumber\\
    &=& 1+k\sqrt{2\epsilon} + k^2\epsilon +O(\epsilon^{3/2}).
\end{eqnarray}
And similarly,
\begin{equation}
     e^{-k\Delta} = 1-k\sqrt{2\epsilon} + k^2\epsilon +O(\epsilon^{3/2}).
\end{equation}

Hence, according to Theorem~\ref{thm:uppbound},
\begin{eqnarray}
    Ch_{\rho}(P_k^*) = \frac{1}{\cosh(k\Delta)}
    = \frac{2}{e^{k\Delta}+e^{-k\Delta}}
    =\frac{1}{1+k^2\epsilon+o(\epsilon)}
    = 1-k^2\epsilon + o(\epsilon);
\end{eqnarray}
and \begin{eqnarray}
    Ch_{\rho}(R_k) &=& e^{-k\Delta}(k\tanh\Delta+1)\nonumber\\
    &=& e^{-k\Delta}(k\frac{e^{\Delta}-e^{-\Delta}}{e^{\Delta}+e^{-\Delta}}+1)\nonumber\\
    &=& \left(1-k\sqrt{2\epsilon} + k^2\epsilon +o(\epsilon)\right)\left(1+k\frac{\sqrt{2\epsilon}}{1+\epsilon}+o(\epsilon)\right)\nonumber\\
    &=& 1-k^2\epsilon + o(\epsilon).
\end{eqnarray}
\\

Since $\cosh \Lambda = \sqrt{1/\rho}$, then
\begin{eqnarray}
    e^{\Lambda} = \frac{1}{\sqrt{\rho}}(1+\sqrt{1-\rho})=
   \frac{1}{\sqrt{1-\epsilon}}(1+\sqrt{\epsilon}) = 1+\sqrt{\epsilon} +\epsilon/2 + o(\epsilon).
\end{eqnarray}
Similarly,
\begin{equation}
    e^{-\Lambda} = 1-\sqrt{\epsilon} +\epsilon/2 +o(\epsilon).
\end{equation}
Therefore, for $k\in \mathbb{N}$,
\begin{eqnarray}
    e^{k\Lambda} &=& (1+\sqrt{\epsilon} +\epsilon/2 + o(\epsilon))^k = 1+k\sqrt{\epsilon} + \frac{k^2}{2}\epsilon +o(\epsilon),\\
    e^{-k\Lambda} &=& (1-\sqrt{\epsilon} +\epsilon/2 + o(\epsilon))^k = 1-k\sqrt{\epsilon} + \frac{k^2}{2}\epsilon +o(\epsilon).
\end{eqnarray}

According to Theorem~\ref{thm:uppbound},
\begin{eqnarray}
    Ch_{\rho}(N_k) &=& \rho^{k/2}e^{-k\Lambda}(1+k\tanh \Lambda)\nonumber\\
    &=& (1-\epsilon)^{k/2}(1-k\sqrt{\epsilon} + \frac{k^2}{2}\epsilon +o(\epsilon))(1+k\frac{\sqrt{\epsilon}}{1+\epsilon/2}+o(\epsilon))\nonumber\\
    &=& 1-\frac{1}{2}(k^2+k)\epsilon +o(\epsilon).
\end{eqnarray}
\end{proof}

\subsection{Proof of Theorem~{\ref{thm:expconv1}}}
\begin{proof}
To prove $0<\tilde{\Delta}<\Delta$, it suffices to prove $1<(1+\exp(2\Lambda))/2<\exp\Delta$. The first inequality is easy to see, after noticed that $\Lambda>0$.

Since $\exp\Lambda = \cosh \Lambda + \sinh\Lambda$ and $\cosh \Lambda = \sqrt{1/\rho}$, we have
\begin{equation}
e^{\Lambda} = \frac{1}{\sqrt{\rho}}\left(1+\sqrt{1-\rho}\right).
\end{equation}
Then
\begin{equation}
    2/(1+\exp(2\Lambda)) = \frac{\rho}{1-\sqrt{1-\rho}}.
    \label{eq:pfthm21}
\end{equation}
Meanwhile, 
\begin{equation}
    e^{\Delta} = \cosh \Delta + \sinh\Delta  = \frac{1+\sqrt{1-\rho^2}}{\rho}.
    \label{eq:pfthm22}
\end{equation}
For $0<\rho<1$, 
\begin{equation}
    \rho^2 = (1+\sqrt{1-\rho^2})(1-\sqrt{1-\rho^2})<(1+\sqrt{1-\rho^2})(1-\sqrt{1-\rho}).\label{pfeq:3-1}
\end{equation}
Combining Eq.(\ref{pfeq:3-1}) with Eq.(\ref{eq:pfthm21}) and (\ref{eq:pfthm22}), we conclude that $(1+\exp(2\Lambda))/2<\exp\Delta$.

For a given $\rho$, the corresponding Chebyshev numbers for the mentioned algorithms are 
\begin{subequations}
\begin{eqnarray}
& &Ch_{\rho}(P_k^*) =  P_k^*(\rho) = \frac{1}{\cosh (k \cosh^{-1}(1/\rho))} = \frac{2}{e^{k\Delta}+e^{-k\Delta}};\\
& &Ch_{\rho}(R_k) =R_k(\rho) = \rho e^{-k\Delta}(k\sinh\Delta + \cosh\Delta) =  e^{-k\Delta}\left(k\frac{\sinh\Delta}{\cosh\Delta}+1\right);\\
& &Ch_{\rho}(N_k) =N_k(\rho) = \left(\sqrt{\rho}e^{-\Lambda}\right)^t\left(k \coth\Lambda +1\right) = e^{-k\tilde{\Delta}}\left(k \coth\Lambda +1\right). \label{pfeq:8-1}
\end{eqnarray}
\end{subequations}
Then, for $0\le \delta<\Delta$,
\begin{subequations}
\begin{eqnarray}
    & &e^{k\delta}Ch_{\rho}(P_k^*) = \frac{2e^{k\delta}}{e^{k\Delta}+e^{-k\Delta}} = \frac{2}{e^{k(\Delta-\delta)}+e^{-k(\Delta+\delta)}} \to 0, \ \textrm{as } k\to\infty;\\
    & &e^{k\delta}Ch_{\rho}(R_k) = e^{-k(\Delta-\delta)}\left(k\frac{\sinh\Delta}{\cosh\Delta}+1\right)\to 0, \  \textrm{as } k\to \infty.
\end{eqnarray}
\end{subequations}
And for $0\le \delta<\tilde{\Delta}$,
\begin{equation}
    e^{k\delta}Ch_{\rho}(N_k) = e^{-k(\tilde{\Delta}-\delta)}\left(k \coth\Lambda +1\right)\to 0, \  \textrm{as } k\to \infty.
\end{equation}
Then one can conclude the theorem, after noting the fact that $|P_k(\mu)| \le Ch_{\rho}(P_k), \forall \mu\in[0,\rho]$, where $P_k \in \{P_k^*, R_k, N_k \}$.
\end{proof}

\subsection{Proof of Theorem~\ref{thm:nesterov}}
\begin{proof}
As for the Chebyshev number for Nesterov's AGD, we use the expression in Eq.(\ref{pfeq:8-1}) again. Then, for $\tilde{\Delta}\le \delta <\Delta$,
\begin{equation}
     e^{k\delta}Ch_{\rho}(N_k) = e^{-k(\tilde{\Delta}-\delta)}\left(k \coth\Lambda +1\right).
\end{equation}
Clearly, the exponent is non-negative, then it blows up, hence does not converge to 0.
\end{proof}

\subsection{Proof of Theorem~\ref{thm:nonstrongly}}
\begin{proof}
For $\mu\in [\rho,1]$,
\begin{subequations}
\begin{eqnarray}
& &P_k^*(\mu) = \frac{\cosh (k \cosh^{-1}(\mu/\rho))}{\cosh (k \cosh^{-1}(1/\rho))} = \frac{e^{k\Theta}+e^{-k\Theta}}{e^{k\Delta}+e^{-k\Delta}};\\
& &R_k(\mu) =  e^{-k\Delta}\left(\frac{\sinh \Delta}{\cosh \Delta}\frac{\cosh \Theta}{\sinh\Theta}\sinh k\Theta + \cosh k\Theta\right) \le  e^{-k\Delta}\left(\sinh k\Theta + \cosh k\Theta\right) = e^{-k(\Delta-\Theta)};\\
& &N_k(\mu) = (\sqrt{\mu}e^{-\Lambda})^k\left(\frac{\sinh \Lambda}{\cosh\Lambda}\frac{\cosh \Psi}{\sinh\Psi}\sinh k\Psi + \cosh k\Psi\right)\le (\sqrt{\mu}e^{-\Lambda})^k\left(\sinh k\Psi + \cosh k\Psi\right).
\end{eqnarray}
\end{subequations}
Then for $0\le \delta<\Delta-\Theta$,
\begin{subequations}
\begin{eqnarray}
& &e^{k\delta}P_k^*(\mu)=\frac{e^{k(\Theta+\delta)}+e^{-k(\Theta-\delta)}}{e^{k\Delta}+e^{-k\Delta}}\le \frac{2}{e^{k(\Delta-\Theta-\delta)}+e^{-k(\Delta+\Theta+\delta)}}\to 0, \ \textrm{as } k\to \infty;\\
& &e^{k\delta}R_k(\mu) \le e^{-k(\Delta-\Theta-\delta)}\to 0,  \ \textrm{as } k\to \infty.
\end{eqnarray}
\end{subequations}
And for $0\le \delta<\Lambda-\Psi$,
\begin{equation}
    e^{k\delta}N_k(\mu) =  e^{k\delta}(\sqrt{\mu}e^{-\Lambda})^ke^{k\Psi}\le e^{-k(\Lambda-\Psi-\delta)}\to 0, \ \textrm{as } k\to \infty.
\end{equation}
\end{proof}

\subsection{Proof of Theorem~\ref{thm:nonstronglycomp}}\label{section:app10}
\begin{proof}
{\it{ Step 1.} Prove $P_k^*(\mu) < R_k(\mu), \forall \mu \in (\rho,1)$.}

To compare the two polynomials on the interval $(\rho,1)$, we simply subtract one from the other, and then look at the positiveness. According to Eq.(\ref{eq:chebyshev}) and (\ref{eq:expressionsor}),
\begin{eqnarray}
    R_k(\mu)-P_k^*(\mu) &=&e^{-k\Delta}\left(\frac{\sinh \Delta}{\cosh \Delta}\frac{\cosh \Theta}{\sinh\Theta}\sinh k\Theta + \cosh k\Theta\right) - \frac{\cosh k\Theta}{\cosh k\Delta} \nonumber\\
    &=& \frac{1}{e^{k\Delta}\sinh \Theta \cosh \Delta \cosh k\Delta}\times \left[\sinh \Delta\cosh \Theta \sinh k\Theta \cosh k\Delta \right. \nonumber\\
    & &\quad - \left. \sinh \Theta \cosh \Delta \sinh k\Delta\cosh k\Theta \right] \nonumber\\
    &=& \frac{1}{e^{k\Delta}\sinh \Theta \cosh \Delta \cosh k\Delta}\times\frac{1}{2}\sinh(k-1)\Theta\sinh(k-1)\Delta\nonumber\\
    & & \quad \times \left[\frac{\sinh(k+1)\Delta}{\sinh(k-1)\Delta}-\frac{\sinh(k+1)\Theta}{\sinh(k-1)\Theta}\right]
\end{eqnarray}
Since the common factor is always positive, the positiveness of $R_k(\mu)-P_k^*(\mu)$ is determined by the square-bracketed stuff. Define auxiliary function 
\begin{equation}
    h(x):=\frac{\sinh(k+1)x}{\sinh(k-1)x}, \quad x>0. \label{eqpf: auxfun}
\end{equation}
Since $\Delta>\Theta$, it would be enough to show that $h(x)$ is strictly monotonically increasing, or equivalently $h'(x)>0$, when $x>0$.

Take derivative of $h(x)$,
\begin{equation}
h'(x) = \frac{1}{\sinh^2(k-1)x}(\sinh 2kx - k\sinh 2x), \quad x>0.
\label{deriv}
\end{equation}
On the other hand, noticing
\begin{equation}
(\sinh 2kx - k\sinh 2x)' = 2k(\cosh 2kx-\cosh 2x)>0, \textrm{ when } x>0 \wedge k>1,\nonumber
\end{equation}
and \begin{equation}(\sinh 2kx - k\sinh 2x)|_{x=0} = 0,\end{equation} we can see Eq.(\ref{deriv}) is always positive for $x>0$.

Therefore, we finish step 1.\\

{\it { Step 2.} Prove $R_k(\mu) < \mu^k, \forall \mu \in (\rho,1)$.}

Consider the following two functions:
\begin{equation}
    g_1(\theta) = \frac{\cosh k\theta}{\cosh^k\theta}; \quad g_2(\theta) = \frac{\cosh\theta\sinh k\theta}{\sinh \theta \cosh^k\theta}; \quad \theta\in (0,\infty), k\in \mathbb{N}.
\end{equation}
Their derivatives are
\begin{equation}
    g_1'(\theta) = \frac{k}{\cosh^{k+1}\theta}\sinh (k-1)\theta,
\end{equation}
and 
\begin{equation}
    g_2'(\theta) = \frac{1}{\sinh^2\theta\cosh^k\theta}\left[k\sinh\theta \cosh (k-1)\theta -\sinh k\theta\right].
\end{equation}
It is not hard to check that both $g_1'$ and $g_2'$ are positive for all $\theta\in (0,\infty)$ and $k >2$. Hence $g_1$ and $g_2$ are both monotonically increasing.

For $\mu \in (\rho,1)$, since $\cosh \Theta = \mu/\rho$ and $\cosh\Delta = 1/\rho$, then $0<\Theta <\Delta$. Thus,
\begin{equation}
    \frac{\cosh \Delta}{\sinh \Delta}g_1(\Theta)+g_2(\Theta) < \frac{\cosh \Delta}{\sinh \Delta}g_1(\Delta)+g_2(\Delta).
\end{equation}
Plug in expressions of $g_1$ and $g_2$, then multiply on both sides the factor $\tanh \Delta\cosh^{k}\Theta \exp(-k\Delta)$, one get
\begin{equation}\label{eqpf:3}
    \rho e^{-k\Delta}\left(\coth\Theta \sinh\Delta \sinh k\Theta + \cosh \Delta \cosh k\Theta \right) < \mu^k.
\end{equation}
\\

{\it { Step 3.} Prove $N_k(\mu) < \mu^k, \forall \mu \in (\rho,1)$.}

Make the following replacement in Eq.(\ref{eqpf:3}):
\begin{equation}
    \mu \to \sqrt{\mu}, \ \rho\to \sqrt{\rho},
\end{equation}
and multiply $\mu^{k/2}$ on both sides, we finish the step 3.
\end{proof}

\subsection{Proof of Theorem~\ref{thm:choosepara}}
\begin{proof}
{\it Part 1. } 

Consider the Chebyshev number $Ch_{\rho}(P_k)$ as a function of $\rho$, and then it is sufficient to show that $Ch_{\rho}(P_k)$ is monotonically increasing, i.e. $d Ch_{\rho}(P_k)/d\rho$ is always non-negative, on the interval $\rho\in(0,1)$.

By Theorem~\ref{thm:uppbound}, we have, for $0<\rho<1$,
\begin{subequations}
\begin{eqnarray}
    \frac{dCh_{\rho}(P_k^*)}{d\rho} &=& \frac{d Ch_{\rho}(P_k^*)}{d\Delta}\frac{d\Delta}{d\rho} = -\frac{k\sinh k\Delta}{\cosh^2 k\Delta}\frac{-1}{\rho\sqrt{1-\rho^2}} \ge 0, \\
    \frac{dCh_{\rho}(R_k)}{d\rho} &=& \frac{d Ch_{\rho}(R_k)}{d\Delta}\frac{d\Delta}{d\rho} \nonumber\\
    &=& \left[-k e^{-k\Delta}(k\tanh \Delta +1) + ke^{-k\Delta}\frac{1}{\cosh^2\Delta}  \right]\frac{-1}{\rho\sqrt{1-\rho^2}}\nonumber\\
    &=& ke^{-k\Delta}\frac{\sinh\Delta}{\cosh^2\Delta} (k\cosh\Delta+\sinh\Delta)\frac{1}{\rho\sqrt{1-\rho^2}}\nonumber\\
    &\ge& 0.
\end{eqnarray}
\end{subequations}
As for $Ch_{\rho}(N_k)$, we first note that $\exp(-k\Lambda)(k\tanh\Lambda+1)$ is positive and monotonically increasing on $\rho\in(0,1)$, because $\Lambda\ge 0$ and 
\begin{eqnarray}
    \frac{d}{d\rho}[\exp(-k\Lambda)(k\tanh\Lambda+1)] &=&\frac{d}{d\Lambda}[\exp(-k\Lambda)(k\tanh\Lambda+1)]\frac{d\Lambda}{d\rho}\nonumber\\
    &=& ke^{-k\Lambda}\frac{\sinh\Lambda}{\cosh^2\Lambda} (k\cosh\Lambda+\sinh\Lambda)\frac{1}{2\rho\sqrt{1-\rho}}
    \nonumber\\
    &\ge& 0.
\end{eqnarray}
And $Ch_{\rho}(N_k)$, as a function of $\rho$, is the product of function $\exp(-k\Lambda)(k\tanh\Lambda+1)$ and function $\rho^{k/2}$, which is also positive and  monotonically increasing on $\rho\in(0,1)$. Therefore, $Ch_{\rho}(N_k)$ is monotonically increasing on $\rho\in(0,1)$.\\

{\it Part 2. } 

The strategy of proof is similar to that in Part 1: for any fix $\mu\in(0,1)$, we prove that $P_k(\mu)$, as a function of $\rho$, is monotonically {\it decreasing} on $\rho\in(0,\mu)$. (Here, We need $\rho< \mu$ to make sure that $\mu$ is in the non-strongly convex regime.)

Omitting some tedious calculation steps, we have, according to Eq.(\ref{eq:chebyshev}),
\begin{eqnarray}
    \frac{d P_k^*(\mu)}{d\rho} &=& \frac{d }{d\rho}\left(\frac{\cosh k\Theta}{\cosh k\Delta}\right)\nonumber\\
    &=& \frac{k}{2} \frac{\cosh \Delta \sinh (k-1)\Delta\sinh (k-1)\Theta}{\cosh^2 k\Delta\sinh \Delta\sinh \Theta}\left[ \frac{\sinh(k+1)\Theta}{\sinh (k-1)\Theta} - \frac{\sinh(k+1)\Delta}{\sinh (k-1)\Delta}\right]
\end{eqnarray}
With the assistance from the auxiliary function $h$ defined in Eq.(\ref{eqpf: auxfun}), we know that the stuff in the squared bracket is negative, since $h$ is monotonically decreasing and $0<\Theta<\Delta$. 

Hence $d P_k^*(\mu)/d\rho<0$, then $P_k^*(\mu)$, as a function in $\rho$, is monotonically decreasing on $\rho\in(0,\mu)$.

Similarly, according to Eq.(\ref{eq:expressionsor}), we have,
\begin{eqnarray}
    \frac{d R_k(\mu)}{d\rho} &=& \frac{d }{d\rho}\left[e^{-k\Delta}(\tanh \Delta \coth \Theta \sinh k\Theta + \cosh k\Theta)\right]\nonumber\\
    &=& e^{-k\Delta}\sinh \Delta\cosh k\Theta \left[k(\coth^2\Delta-\coth^2\Theta) + \tanh k\Theta \coth \Theta(\frac{1}{\sinh^2\Theta} - \frac{1}{\sinh^2\Delta})    \right]\nonumber
\end{eqnarray}
We note the facts that $1/\sinh^2\Theta>1/\sinh^2\Delta$, and that $\tanh k\Theta< k\tanh\Theta$ for $0<\Theta<\Delta$, then following the above formula, we have
\begin{eqnarray}
    \frac{d R_k(\mu)}{d\rho} &<& e^{-k\Delta}\sinh \Delta\cosh k\Theta \left[k(\coth^2\Delta-\coth^2\Theta) + k(\frac{1}{\sinh^2\Theta} - \frac{1}{\sinh^2\Delta})    \right]=0,
\end{eqnarray}
because $\coth^2 x - 1/\sinh^2x = 1$.

Hence, $R_k(\mu)$, as a function of $\rho$, is monotonically {decreasing} on $\rho\in(0,\mu)$.

The argument about $N_k(\mu)$ directly follows the result for $R_k(\mu)$: the term $\mu^{t/2}$, in Eq.(\ref{eq:expressionnesterov}), is a constant (independent of $\rho$), and the remaining part is exactly same as $R_k$, after the replacements: $\mu\to\sqrt{\mu}, \rho\to \sqrt{\rho}$.

\end{proof}

\end{document}